\documentclass{article} 
\usepackage{iclr2027_conference,times}

\usepackage{amsmath,amsfonts,bm}

\def\eqref#1{equation~\ref{#1}}

\def\1{\bm{1}}

\DeclareMathAlphabet{\mathsfit}{\encodingdefault}{\sfdefault}{m}{sl}
\SetMathAlphabet{\mathsfit}{bold}{\encodingdefault}{\sfdefault}{bx}{n}

\newcommand{\E}{\mathbb{E}}

\newcommand{\R}{\mathbb{R}}

\usepackage{hyperref}
\usepackage{url}
\makeatletter
\g@addto@macro\UrlBreaks{\do\0\do\1\do\2\do\3\do\4\do\5\do\6\do\7\do\8\do\9%
  \do\a\do\b\do\c\do\d\do\e\do\f\do\g\do\h\do\i\do\j\do\k\do\l\do\m%
  \do\n\do\o\do\p\do\q\do\r\do\s\do\t\do\u\do\v\do\w\do\x\do\y\do\z%
  \do\A\do\B\do\C\do\D\do\E\do\F\do\G\do\H\do\I\do\J\do\K\do\L\do\M%
  \do\N\do\O\do\P\do\Q\do\R\do\S\do\T\do\U\do\V\do\W\do\X\do\Y\do\Z}
\makeatother

\usepackage{amssymb}
\usepackage{amsthm}
\usepackage{booktabs}
\usepackage[table]{xcolor}
\definecolor{oursbg}{RGB}{223,235,252}
\definecolor{digitbg}{RGB}{241,241,241}
\definecolor{promptbg}{RGB}{248,249,250}
\definecolor{promptedge}{RGB}{214,218,222}
\usepackage{multirow}
\usepackage{graphicx}
\usepackage{capt-of}
\newcommand{\sd}[1]{{\tiny$\pm$#1}}
\newcommand{\hd}{Acc\,$\uparrow$ & ECE\,$\downarrow$ & AUROC\,$\uparrow$ & Brier\,$\downarrow$}
\usepackage{lastpage}
\usepackage{enumitem}

\theoremstyle{definition}
\newtheorem{definition}{Definition}

\theoremstyle{plain}
\newtheorem{proposition}{Proposition}

\newcommand{\emission}{\mathcal{E}}
\newcommand{\trainpath}{\mathcal{T}}

\newcommand{\markmainend}{\phantomsection\label{paper:mainend}}

\title{Beyond Verbalized Confidence: Calibrating Reasoners with Differentiable Readouts}

\author{Chenxiao Fan$^{1,2}$, Chongming Gao$^{1}$, Gangyi Zhang$^{2}$, Leyang Shen$^{3}$, Yaxin Gong$^{1}$,
\\
\textbf{Jiamin Wang$^{1}$, Jiakai Wang$^{2}$, Dong Wang$^{2}$, Yang Liu$^{2}$, Fuli Feng$^{1}$, Xiangnan He$^{1}$}\\
$^{1}$ University of Science and Technology of China \\ 
$^{2}$ Qwen Business Unit of Alibaba \quad $^{3}$ National University of Singapore \\ 
\texttt{simonfan@mail.ustc.edu.cn} \\
}

\iclrfinalcopy 

\begin{document}

\maketitle

\begin{abstract}
Reinforcement learning with verifiable rewards (RLVR) trains reasoning models
to produce correct answers, but does not ensure that their stated confidence
is calibrated. The resulting models are systematically overconfident.
Recent methods train calibration inside the RLVR loop by having the model
state a numerical confidence alongside its answer, but they all obtain the
confidence by sampling it as text.
This choice imposes two costs: a sampled confidence introduces variance and in
practice collapses to a handful of distinct values, and sampling makes the
confidence non-differentiable, forcing the calibration loss through a scalar
reward.
We propose \textbf{CREDO} (Confidence REaDOut) to replace sampling with a
deterministic readout. While RLVR optimizes correctness, CREDO reads the
confidence from a dedicated token pair in the model's output distribution and
trains it by differentiable regression.
CREDO further turns the trained confidence into a signal for accuracy,
weighting rollouts by how far confidence and outcome disagree, so that
accuracy and calibration improve together.
Across mathematical and code reasoning, CREDO attains the best accuracy and
calibration, and the gains extend to abstention and selective prediction.
\end{abstract}

\section{Introduction}
\label{sec:intro}

Reinforcement learning with verifiable rewards (RLVR) has become the standard
recipe for training reasoning models \citep{grpo,deepseekr1}.
Answers in mathematics and code can be checked automatically, so the training
reward is simply whether the answer is correct.
This produces strong reasoners, but the reward carries no signal about how
confident the model should be in any particular answer. Models trained this way
are systematically overconfident
\citep{yang2024verbalized,bereket2025,mei2025}.

Overconfidence has practical costs: downstream systems route answers by their
stated confidence, so a confidently wrong answer passes through the filters
meant to catch it \citep{geifman2017,abstentionbench}.
The remedy is calibration: a model's stated confidence should match how
often it is actually correct, so that when it reports 0.9, roughly nine in ten
of those answers are right \citep{degroot1983,guo2017}.
Calibration does not emerge from a correctness reward, so it has to be trained
for.

For calibration to be trainable, the model must produce a confidence estimate
alongside each answer (Figure~\ref{fig:channels}a).
One approach reads confidence from the model's generation probabilities
\citep{fomicheva2020}, but these reflect the likelihood
of the generated text rather than the probability of being correct.
Recent methods instead ask the model to state a numerical confidence after its
answer, scored by a reward based on a proper scoring rule
\citep{gneiting2007}.
Several methods along this route have refined the reward design and the balance
between correctness and calibration
\citep{rlcr,dcpo,sayself,rewardingdoubt,leng2025}.
Despite their differences, these methods share one choice: the confidence
is a \emph{numeral sampled as text}, which forces the calibration loss to work
through a scalar reward (Figure~\ref{fig:channels}b).

\begin{figure}[t]
\centering
\includegraphics[width=\textwidth]{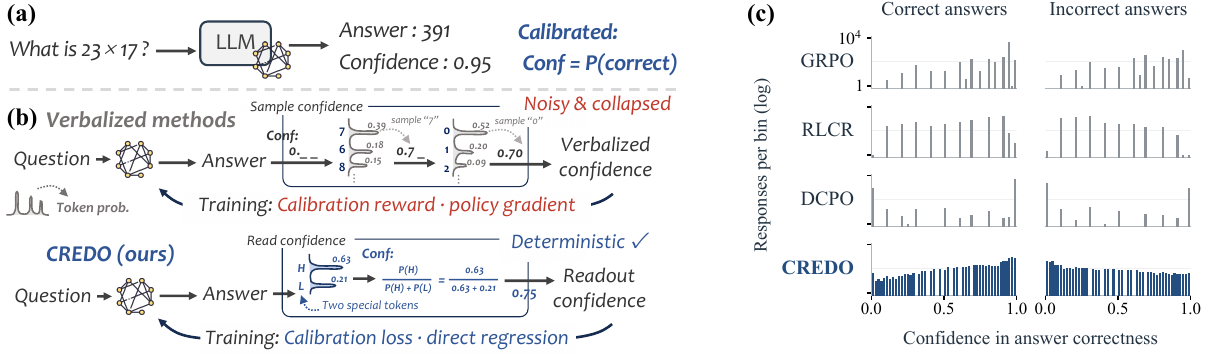}
\caption{\textbf{(a)} The calibration task: a model answers and reports a
confidence; calibration means the reported probability matches observed
correctness. \textbf{(b)} Verbalized methods sample the confidence as text and
train it through a scalar reward. CREDO reads the relative probability of a
pair of reserved tokens and trains it by direct regression from the calibration
loss. \textbf{(c)} Confidence distributions on code. Verbalized reports (GRPO, RLCR, DCPO)
concentrate on a few values shared by correct and incorrect answers, even when
calibration is trained explicitly (RLCR, DCPO). The readout (CREDO) spreads
across the full range.}
\label{fig:channels}
\end{figure}

This choice imposes two costs. First, a sampled confidence introduces variance
that persists as long as the report is stochastic. In practice, the reported
confidence collapses to a handful of distinct values, and correct and incorrect
answers end up sharing the same confidence (Figure~\ref{fig:channels}c).
Second, sampling makes the confidence non-differentiable, forcing the
calibration loss to work through a scalar reward rather than a direct gradient.
The resulting gradient estimate carries additional variance from the report,
making calibration harder to learn.
Both costs follow from verbalizing the confidence; redesigning the reward
cannot remove them.

We propose CREDO (Confidence REaDOut),\footnote{The name also reads as the Latin
\emph{credo}, ``I believe.''} which reads the confidence directly from the
output distribution rather than sampling it.
While RLVR optimizes correctness, CREDO sets aside a pair of reserved tokens
and takes their relative probability as the confidence, trained by
differentiable regression (Figure~\ref{fig:channels}b).
The readout is deterministic and directly differentiable, so the calibration
loss trains it by gradient descent.
CREDO further turns the trained confidence into a signal for accuracy.
A confident mistake or an unexpected success is more informative than a rollout
where confidence and outcome already agree, so CREDO weights learning on the
answer by the size of the discrepancy.
Accuracy and calibration improve together in both domains.

Our main contributions are as follows:
\begin{itemize}[leftmargin=*]
\item We show that sampling the confidence as text imposes two costs: the
report introduces variance and in practice collapses to a handful of distinct
values, and the sampling step makes the confidence non-differentiable, forcing
the calibration loss through a scalar reward.
\item We propose CREDO, which reads the confidence as the relative probability
between a pair of reserved tokens and trains it by differentiable regression,
giving the calibration loss a direct gradient path. The policy gradient
continues to handle correctness and credit assignment.
\item We turn the trained confidence into a signal for accuracy, weighting
each rollout by the gap between confidence and outcome. The weighting improves
accuracy in both domains.
\item Across mathematical and code reasoning, CREDO attains the best accuracy
and calibration. Ablations isolate the contributions of the readout and the
weighting, and the improved calibration pays off downstream in abstention and
selective prediction.
\end{itemize}

\section{Confidence channels and their costs}
\label{sec:channels}

We start from the RLVR setup and the calibration objective
(\S\ref{sec:setup}). The rest of the section identifies the channel choice
behind verbalized confidence (\S\ref{sec:onechoice}), analyzes its two costs
(\S\ref{sec:costs}), and derives three requirements for removing them
(\S\ref{sec:requirements}).

\subsection{Setup: RLVR and the calibration objective}
\label{sec:setup}

We work in the GRPO form of RLVR \citep{grpo}.
For a question $x$, the policy $\pi_\theta$ samples a group of $G$ rollouts
$o_1,\dots,o_G$, and a verifier scores each one for correctness,
$a_i=\mathbb{1}[y_i\equiv y^\ast]$, where $y_i$ is the answer extracted from
$o_i$ and $y^\ast$ the reference answer. The policy maximizes the clipped
objective of \citet{ppo}, using the group as the baseline:
\begin{equation}
\label{eq:grpo}
\mathcal{J}(\theta)=\mathbb{E}\Big[\tfrac1G\sum_{i=1}^{G}\tfrac{1}{|o_i|}
\sum_{k\in o_i}\min\big(\rho_{i,k}\hat A_{i,k},\
\mathrm{clip}(\rho_{i,k},1\pm\epsilon)\,\hat A_{i,k}\big)\Big],
\quad \hat A_{i,k}=\hat A_i=\frac{r_i-\bar r}{\mathrm{std}(r)}.
\end{equation}
Here $\rho_{i,k}$ is the token-level importance ratio, $\epsilon$ the clipping
range, and $\bar r$ and $\mathrm{std}(r)$ the mean and standard deviation of the group's
rewards. The reward is the correctness score, $r_i=a_i$. In standard GRPO every
token of a rollout shares the same $\hat A_i$.
The verifier's signal is not differentiable in the parameters, so training on
it runs through the policy gradient.

A calibrated model must also report a scalar $q\in[0,1]$ alongside its answer,
and the report must be right as often as it claims:
\begin{equation}
\label{eq:cal}
\mathbb{E}[a\mid q=v]=v,\qquad\forall v\in[0,1].
\end{equation}
The standard way to train toward~\eqref{eq:cal} is a strictly proper scoring
rule, whose expectation is uniquely minimized at $\mathbb{E}[a\mid q]$
\citep{gneiting2007}; the Brier score $(q-a)^2$ is the canonical example
\citep{brier1950}.
RLVR on its own does not target~\eqref{eq:cal}. Its reward depends only on correctness and is indifferent to the model's
confidence, so nothing in the objective pushes a confident error down \citep{rlcr}.

\subsection{The verbalized channel}
\label{sec:onechoice}

A confidence report has two design choices: where the confidence comes from
and how it is trained.

\begin{definition}[confidence channel]
\label{def:channel}
A \emph{confidence channel} is a pair $(\emission,\trainpath)$: an emission map
$\emission$ from the rollout prefix to a scalar report, and a training path
$\trainpath$, the route by which the calibration objective reaches the
report's value.
\end{definition}

The methods that train a model to state its confidence inside RLVR make the
same pair of choices. The confidence is written out as a numeral in the text,
\begin{equation}
\label{eq:verb}
\emission^{\mathrm{verb}}:\quad \hat q=\mathrm{parse}(s),
\qquad s\sim\pi_\theta(\cdot\mid h),
\end{equation}
where $h=(x,y,\dots)$ is the rollout prefix up to the confidence slot. The
model generates a rollout containing a solution $y$ to $x$, samples a numeral
$s$ (say ``0.9'') at the confidence slot, and parses it into a
report $\hat q$. We call a channel whose emission includes a sampling step a
\emph{verbalized channel}.

This choice of $\emission$ determines $\trainpath$. Sampling makes the
realized value of $\hat q$ non-differentiable in the parameters, so the
calibration signal can only enter~\eqref{eq:grpo} as a scalar reward.
RLCR \citep{rlcr} and DCPO \citep{dcpo} are the two representative methods
along this route,
\begin{equation}
\label{eq:rewards}
\begin{aligned}
r^{\mathrm{RLCR}}&=a-(\hat q-a)^2,\\[2pt]
r^{\mathrm{DCPO}}_{\mathrm{reason}}&=a,\qquad
r^{\mathrm{DCPO}}_{\mathrm{conf}}=-\big|\hat q-R_{IG}\big|,\qquad
R_{IG}=\lambda\tilde R_G+(1-\lambda)a,
\end{aligned}
\end{equation}
where $\tilde R_G$ is the group's mean accuracy and $\lambda$ its
mixing weight.
The first couples the calibration term into a single reward; the second scores
the two segments separately and mixes the target. The disagreements are all
internal to the reward, but the channel is shared: every verbalized variant
uses the same emission and training path
\citep{sayself,rewardingdoubt,leng2025,lacie2024,lovec2026}.

In both analyses, the optimal report is calibrated and the calibration term
does not hurt accuracy \citep{rlcr,dcpo}.
These analyses validate the reward design but leave the channel unexamined.

\subsection{Costs of the verbalized channel}
\label{sec:costs}

Each component of the verbalized channel carries a cost.
The first falls on the emission $\emission$
(proofs of all propositions are in Appendix~\ref{app:proofs}).

\begin{proposition}[a verbalized report pays a variance tax or determinizes]
\label{prop:variance}
Fix a rollout prefix and let $\hat q$ be the parsed report of~\eqref{eq:verb},
with conditional mean $\bar q$ and variance $v$. Every verbalized policy either
pays a per-prefix tax that persists as long as the report is stochastic, or
determinizes its per-prefix report distribution. Under the squared loss the tax
is exactly $v$, since $\mathbb{E}[(\hat q-t)^2]=(\bar q-t)^2+v$ for any target
$t\in[0,1]$; under any strictly convex loss $\ell$, Jensen's inequality gives
$\mathbb{E}[\ell(\hat q,t)]-\ell(\bar q,t)>0$.
\end{proposition}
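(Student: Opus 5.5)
The argument conditions on a fixed prefix $h$, so that the only randomness left is the draw $s\sim\pi_\theta(\cdot\mid h)$ and the report $\hat q=\mathrm{parse}(s)$ it induces. The target $t\in[0,1]$ is held fixed as a deterministic quantity given $h$. This is the relevant case for both the Brier target $a$ (once the answer $y$ sits inside $h$, correctness is determined) and the DCPO target $R_{IG}$ (conditional on the group). Since $\hat q\in[0,1]$ is bounded, the mean $\bar q$ and the variance $v$ both exist.

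For the squared loss, expand $(\hat q-t)^2=\big((\hat q-\bar q)+(\bar q-t)\big)^2$ and take the conditional expectation. The cross term is $2(\bar q-t)\,\mathbb{E}[\hat q-\bar q]=0$, which leaves
\[
\mathbb{E}[(\hat q-t)^2]=(\bar q-t)^2+v .
\]
Here $(\bar q-t)^2$ is the loss a deterministic report at $\bar q$ would incur, so the excess over that report is exactly $v$. It does not depend on $t$, so no choice of target or reward shaping can cancel it. It vanishes if and only if $v=0$, that is, if and only if $\hat q$ is almost surely constant given $h$. This is the dichotomy in the statement: either the tax is paid, or the per-prefix report distribution is determinized.

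For a general loss $\ell(\cdot,t)$ that is strictly convex in its first argument, apply Jensen's inequality to get $\mathbb{E}[\ell(\hat q,t)]\ge\ell(\bar q,t)$. The main care point is strictness: the inequality is strict exactly when $\hat q$ is non-degenerate, i.e.\ $v>0$. I would prove this by taking a supporting line of $\ell(\cdot,t)$ at $\bar q$, which lies in $[0,1]$ by convexity of the interval. Strict convexity makes $\ell(u,t)$ lie strictly above this line for every $u\neq\bar q$. The event $\{\hat q\neq\bar q\}$ has positive probability whenever $v>0$, and integrating gives a strict gap. So the statement's strict inequality should be read as holding under $v>0$; when $v=0$ the gap is zero, which is again the determinized branch.

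Finally, I would note the phrase \emph{persists as long as the report is stochastic}. It follows because the bound holds at every parameter value: along any training trajectory, the excess is positive at each iterate where $v>0$. I expect no real obstacle beyond the strict-Jensen step and keeping clear that $t$ must be non-random given $h$.
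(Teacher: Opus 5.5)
Your proposal is correct and follows essentially the same route as the paper's proof. For the squared loss it is the same bias--variance expansion about $\bar q$, with a vanishing cross term and a target-independent excess $v$; for a strictly convex loss it is Jensen, with equality only when $\hat q$ is almost surely constant given $h$, which yields the tax-or-determinize dichotomy. Your supporting-line argument simply spells out the strictness step that the paper states in one line; it needs $\bar q$ to be interior, and $v>0$ already forces $\bar q\in(0,1)$.
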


Proposition~\ref{prop:variance} leaves two outcomes: variance or
determinization. Determinization avoids the tax but does not by itself prevent
different prefixes from using different values.
Empirically, however, reports cluster onto a handful of values,
so correct and incorrect answers share the same confidence
(Figure~\ref{fig:collapse}). The collapse is visible already in an untrained
base model, and training does not spread the values out: GRPO, RLCR, and DCPO
all have effective levels in single digits.

\begin{figure}[t]
\centering
\includegraphics[width=\textwidth]{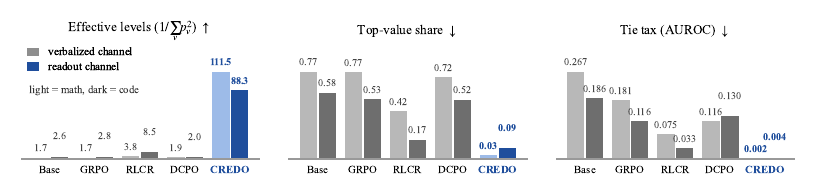}
\caption{Collapse of the verbalized channel, with CREDO for contrast. Effective levels $=1/\sum_v p_v^2$ (equiprobable-level
equivalent); top-value share = pool fraction of the most frequent value; tie
tax = AUROC lost to exactly equal reports.}
\label{fig:collapse}
\end{figure}

The variance can be avoided if the report is not sampled but computed. We call
a channel whose report is a
deterministic functional of the output distribution a \emph{readout channel}
(the rightmost group of Figure~\ref{fig:collapse}).
Even the conditional mean of a verbalized channel is such a functional and
removes the variance, but it cannot be optimized through the reward, which only
observes sampled values. Removing the variance is not enough; the
calibration loss must also reach the report directly.

The second cost falls on $\trainpath$: how the calibration loss trains the
report.

\begin{proposition}[gradient availability]
\label{prop:gradient}
Fix a calibration loss $\ell$. A sampled report admits no unbiased pathwise
gradient estimator, so $\ell$ can be queried only through its value. If instead
the report $c$ is a differentiable function of the parameters at a single
position, the exact derivative $\frac{\partial\ell}{\partial c}\nabla_\theta c$
is available, with zero estimator variance given the rollout. Exact pathwise
training also requires the report to be computable inside the forward pass that
generates the rollout. A verbalized numeral does not satisfy this: its realized
value is a sample, and its conditional mean requires marginalizing over the
numeral span.
\end{proposition}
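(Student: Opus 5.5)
The proposition has three parts, and I would take them in the order stated.

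\emph{No unbiased pathwise estimator for a sampled report.} I would model the report as $\hat q=\mathrm{parse}(s)$ with $s\sim\pi_\theta(\cdot\mid h)$, where $s$ ranges over a discrete set of numeral strings. A pathwise (reparameterization) estimator needs a representation $\hat q=g(\theta,\xi)$, with noise $\xi$ independent of $\theta$ and $g$ differentiable in $\theta$ for almost every $\xi$, such that $\nabla_\theta\mathbb{E}[\ell(\hat q,t)]=\mathbb{E}[\partial_\theta \ell(g(\theta,\xi),t)]$.

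Because $\hat q$ takes values in a finite (or countable) set, any such $g(\cdot,\xi)$ is piecewise constant in $\theta$ for fixed $\xi$. Its derivative is therefore zero wherever it exists, so the pathwise estimator has expectation $0$. The true gradient is $\sum_s \ell(\mathrm{parse}(s),t)\nabla_\theta\pi_\theta(s\mid h)$, which is generically nonzero. This mismatch rules out an unbiased pathwise estimator.

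The unbiased estimators that remain are score-function ones. These use $\ell(\hat q,t)$ only as a scalar weight on $\nabla_\theta\log\pi_\theta(s\mid h)$, so $\ell$ is "queried only through its value." I would state this as the formal content of the first sentence, interpreting "pathwise" in the standard reparameterization sense. The main subtlety is being precise about what counts as pathwise. Degenerate constructions, such as $g$ jumping at $\theta$-dependent thresholds, must be handled by the piecewise-constancy argument: the jump locations move with $\theta$, but for almost every $\xi$ the map is locally constant.

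\emph{Differentiable readout.} If $c=c_\theta(h)$ is a differentiable function computed at a single position, the chain rule gives $\nabla_\theta\ell(c,a)=\frac{\partial\ell}{\partial c}\nabla_\theta c$ exactly. Conditional on the rollout, meaning $h$ and $a$ are fixed, this quantity is deterministic, so the estimator variance is zero. The only remaining randomness comes from the rollout itself, which is shared with the policy gradient.

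\emph{Why a verbalized numeral fails the computability requirement.} The realized value is a sample, so the first part applies to it directly. For the conditional mean $\bar q=\sum_s \mathrm{parse}(s)\pi_\theta(s\mid h)$, I would observe that a multi-token numeral has probability $\prod_j\pi_\theta(s_j\mid h,s_{<j})$. Evaluating $\bar q$ therefore requires forward passes over every prefix of every numeral in the span, which is exponentially many continuations in the number of digit positions. This cannot be done within the single forward pass that generates the rollout, which only computes distributions along the one sampled path.

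I would close by contrasting this with a readout over a single position, where the relative probability of two reserved tokens is already available from the logits at that position.
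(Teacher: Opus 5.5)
Your proposal is correct and follows essentially the same route as the paper's proof. It uses the locally-constant argument to show the reparameterized derivative vanishes while $\sum_s \ell(\mathrm{parse}(s),t)\,\nabla_\theta\pi_\theta(s\mid h)$ does not, the chain rule for the single-position readout, and the observation that $\bar q$ sums over every string of the numeral span while the generating pass produces only one. One small caution: keep the range finite, as the paper does. For a merely countable (non-discrete) range the step ``derivative is zero wherever it exists'' can fail; for example, rounding $\theta$ to a rational within $\theta^2$ gives a countable-valued map with derivative $1$ at $0$.
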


For the same loss $\ell$, the two kinds of report give gradient estimators
\begin{equation}
\label{eq:estimators}
\underbrace{\ \hat g_{\mathrm{SF}}=\big(\ell(\hat q,t)-b\big)\,
\nabla_\theta\log \pi_\theta(\hat q\mid \cdot)\ }
_{\text{sampled report: zero-order query}}
\qquad\text{vs.}\qquad
\underbrace{\ \hat g_{\mathrm{PW}}=\frac{\partial \ell(c,t)}{\partial c}\,
\nabla_\theta c\ }
_{\text{deterministic report: exact derivative}}
\end{equation}
where $b$ is a baseline. The left side is the score-function estimator of
\citet{williams1992}; the right is the pathwise form, available when the report
is differentiable in the parameters \citep{kingma2014,mohamed2020}.

The two costs reinforce each other: once the grid collapses, the channel can no
longer express fine-grained values, and the scalar reward provides no direct gradient to recover them.

\subsection{Design requirements}
\label{sec:requirements}

The two costs come from a single choice. A report is either sampled, and then
the calibration signal reaches it only through a score-function estimate, or
computed deterministically, and then the calibration gradient reaches it
directly.

The two costs then convert into three requirements, two on $\emission$ and one
on $\trainpath$:
\begin{enumerate}[label=(\roman*),ref=(\roman*),leftmargin=*]
\item\label{req:functional} the report is a deterministic functional of the
output distribution rather than a sample from it, removing the variance term of
Proposition~\ref{prop:variance};
\item\label{req:grid} its value is not confined to the vocabulary grid,
removing the tie tax of Figure~\ref{fig:collapse};
\item\label{req:gradient} the calibration loss is differentiable in the
report, and the report is computable exactly in the forward pass that
generates the rollout, which yields the right-hand branch
of~\eqref{eq:estimators}.
\end{enumerate}
A pair of reserved tokens and their relative probability is a simple
functional of the output distribution that meets all three. The next section
develops such a channel and trains its report directly.

\section{CREDO: instantiating the readout channel}
\label{sec:credo}

CREDO operates within the RLVR loop of \S\ref{sec:setup}
(Figure~\ref{fig:method}), changing the confidence channel. It takes the
report from a pair of reserved tokens (\S\ref{sec:readout}) and trains it by a
regression that uses the calibration gradient directly (\S\ref{sec:calloss}),
meeting the three requirements of \S\ref{sec:requirements}. The policy gradient keeps
the verifier's correctness judgment and the credit assignment over the text
(\S\ref{sec:policyloss}), and the trained readout weights learning on the
answer (\S\ref{sec:sw}).

\begin{figure}[t]
\centering
\includegraphics[width=0.9\textwidth]{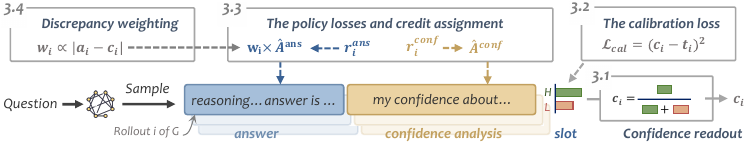}
\caption{Overview of CREDO. Each rollout contains an answer,
a confidence analysis, and a confidence slot. The readout~$c$ is the
relative probability of two reserved tokens at the slot. The calibration
loss trains~$c$ by differentiable regression toward a hybrid target~$t$.
The policy gradient scores the answer and analysis segments through
their text. Discrepancy weighting gives more weight to rollouts where
the outcome and the readout disagree.}
\label{fig:method}
\end{figure}

\subsection{Rollout format and the readout}
\label{sec:readout}

A rollout ends in a fixed format with three segments: an answer, a confidence
analysis, and the confidence slot.
At the first generated position inside the slot, a pair of reserved tokens
$\{H,L\}$ set aside in the vocabulary serves as the two events of the readout
\citep{selfref}.
Writing $h'$ for the prefix up to the slot, we
restrict the next-token distribution to the pair and renormalize,
\begin{equation}
\label{eq:readout}
c\;=\;\frac{\pi_\theta(H\mid h')}{\pi_\theta(H\mid h')+\pi_\theta(L\mid h')}
\;=\;\sigma(z_H-z_L)\ \in(0,1),
\end{equation}
where $z_H$ and $z_L$ are the logits of the two tokens and $\sigma$ is the
logistic function.
$c$ depends on the output distribution alone and varies continuously, which
gives~\ref{req:functional} and~\ref{req:grid}.
Because the readout is finished inside the forward pass that generates the
rollout, the second half of~\ref{req:gradient} holds as well. The training in \S\ref{sec:calloss} gives
$c$ its meaning as a confidence.

\subsection{The calibration loss}
\label{sec:calloss}

The readout is differentiable, which gives the first half
of~\ref{req:gradient}. We train it by a regression whose target takes the
within-group hybrid form of \citet{dcpo},
\begin{equation}
\label{eq:calloss}
\mathcal{L}_{\mathrm{cal}}=\frac{1}{|\mathcal{V}|}\sum_{i\in\mathcal{V}}
\big(c_i-t_i\big)^2,\qquad
t_i=\gamma\,a_i+(1-\gamma)\,\bar a_{\mathrm{group}},
\end{equation}
where $\mathcal{V}$ collects the rollouts whose readout is available,
$\gamma\in[0,1]$ is the mixing weight, and $\bar a_{\mathrm{group}}$ is the mean
correctness over all $G$ rollouts. The group term shrinks the binary target toward
the group mean, reducing target variance conditional on the question.
The gradient follows the right-hand branch of~\eqref{eq:estimators} into the
readout position:
$\nabla_\theta\mathcal{L}_{\mathrm{cal}}\propto
2(c-t)\,c(1-c)\,\nabla_\theta(z_H-z_L)$,
one term per sample, with no estimator variance given the rollout.

\begin{proposition}[the Bayes-optimal readout]
\label{prop:fixedpoint}
Fix the rollout policy and assume independent rollouts given~$x$.
Let $\mathcal{I}$ comprise $x$ and this rollout's prefix up to the readout,
and $\mu=\mathbb{E}[a\mid\mathcal{I}]$ the conditional correctness probability.
The Bayes-optimal readout for the unfiltered per-rollout risk of~\eqref{eq:calloss} is
$c^\ast=\lambda\mu+(1-\lambda)\,p(x)$,
$\lambda=\gamma+(1{-}\gamma)/G$, and $\mathbb{E}[c^\ast\mid x]=p(x)$
for every~$\gamma$.
\end{proposition}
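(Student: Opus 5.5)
The plan is to reduce the problem to the standard fact that squared loss is minimized by a conditional expectation, and then compute that expectation explicitly. For the unfiltered per-rollout risk $\mathbb{E}[(c_i-t_i)^2]$, the readout $c_i$ is a function of the information $\mathcal{I}$ available at the slot, namely $x$ and rollout $i$'s own prefix. The Bayes-optimal choice is therefore $c^\ast=\mathbb{E}[t_i\mid\mathcal{I}]$. This follows from the bias--variance identity already used in Proposition~\ref{prop:variance}: for any $\mathcal{I}$-measurable $c$, $\mathbb{E}[(c-t)^2\mid\mathcal{I}]=(c-\mathbb{E}[t\mid\mathcal{I}])^2+\mathrm{Var}(t\mid\mathcal{I})$. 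Everything then reduces to computing $\mathbb{E}[t_i\mid\mathcal{I}]$ with $t_i=\gamma a_i+(1-\gamma)\bar a_{\mathrm{group}}$.

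The key step is to expand the group mean as $\bar a_{\mathrm{group}}=\frac1G a_i+\frac1G\sum_{j\ne i}a_j$. This gives
\[
t_i=\Big(\gamma+\tfrac{1-\gamma}{G}\Big)a_i+\tfrac{1-\gamma}{G}\sum_{j\ne i}a_j .
\]
The coefficient on $a_i$ is already $\lambda$. Conditioning on $\mathcal{I}$ turns the first term into $\lambda\mu$. For the other terms, I will use the assumption that rollouts are independent given $x$. Each other rollout $o_j$ is independent of rollout $i$'s prefix given $x$, so $\mathbb{E}[a_j\mid\mathcal{I}]=\mathbb{E}[a_j\mid x]=p(x)$, where $p(x)$ is the per-question success probability under the fixed policy. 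The $G-1$ such terms contribute $\frac{(1-\gamma)(G-1)}{G}p(x)$.

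The coefficient identity $(1-\gamma)(G-1)/G=1-\lambda$ is immediate. Together these give $c^\ast=\lambda\mu+(1-\lambda)p(x)$.

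For the last claim, I will apply the tower property: $\mathbb{E}[\mu\mid x]=\mathbb{E}[\mathbb{E}[a\mid\mathcal{I}]\mid x]=p(x)$, because $\mathcal{I}$ refines $x$. Hence $\mathbb{E}[c^\ast\mid x]=\lambda p(x)+(1-\lambda)p(x)=p(x)$, and this holds for every $\gamma$.

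The main obstacle is being careful about what ``unfiltered'' and ``fixed policy'' buy. The filter set $\mathcal{V}$ could correlate with correctness, and the policy must not depend on $\theta$ through the target. I would state explicitly that the risk is taken over all $G$ rollouts with the rollout distribution held fixed, so that $t_i$ is treated as a stop-gradient target. I would also make precise that $o_j$ for $j\ne i$ is conditionally independent of $o_i$'s prefix given $x$, which is exactly what justifies replacing $\mathbb{E}[a_j\mid\mathcal{I}]$ by $p(x)$.
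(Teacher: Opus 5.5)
Your proof is correct and follows the paper's argument essentially step for step. It minimizes the squared error pointwise by $\mathbb{E}[t\mid\mathcal{I}]$, uses conditional independence given $x$ to replace $\mathbb{E}[a_j\mid\mathcal{I}]$ by $p(x)$ for $j\ne i$, and applies the tower property to get $\mathbb{E}[c^\ast\mid x]=p(x)$; the only cosmetic difference is that you first rewrite $t_i$ with coefficient $\lambda$ on $a_i$, whereas the paper conditions $\bar a_{\mathrm{group}}$ directly and collects terms afterward.
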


Here $p(x)=\mathbb{E}_{\pi_\theta}[a\mid x]$ is prompt-level accuracy under the
current policy.
Decreasing $\gamma$ reduces target variance conditional on the question but
shrinks $c^\ast$ toward $p(x)$, biasing it relative to~$\mu$.
Prompt-level mean consistency does not, in general, imply calibration as
defined in~\eqref{eq:cal}, so we treat the hybrid target as regularized
correctness supervision. \S\ref{sec:attribution} evaluates sensitivity to~$\gamma$;
Appendix~\ref{app:proofs} contains the proof and quantifies target variance and shrinkage.

\subsection{Policy losses and credit assignment}
\label{sec:policyloss}

With the report's value trained by the regression, the policy gradient is left
with two jobs: whether the answer is right, and whether the confidence
analysis deserves reinforcement.
The answer and analysis segments are scored separately, on the token sets
$\mathrm{ans}_i$ and $\mathrm{conf}_i$, and $F_i\in\{0,1\}$ records whether all
three segments are present and parseable,
\begin{equation}
\label{eq:rewards3}
r^{\mathrm{ans}}_i\;=\;\beta\,F_i+a_i,\qquad
r^{\mathrm{conf}}_i\;=\;F_i\,\big(1-(t_i-c_i)^2\big),
\end{equation}
where $\beta$ weights the format term.
Each reward is standardized within its own group into $\hat A^{\mathrm{ans}}$
and $\hat A^{\mathrm{conf}}$, each acting only on its own segment; the segmented
credit assignment is inherited from \citet{dcpo}.
The policy gradient thus scores the two text segments, while the slot's value
is trained by the exact derivative of $\mathcal{L}_{\mathrm{cal}}$; the policy gradient and the calibration loss
share parameters but no credit.

\subsection{Discrepancy weighting: the readout as a training signal}
\label{sec:sw}

A rollout whose outcome falls far from what the model expected has more to teach
about the answer.
We therefore turn that distance into a per-sample weight.
\emph{Discrepancy weighting} centers it within the group and lets the
high-discrepancy rollouts count for more in the answer segment,
\begin{equation}
\label{eq:sw}
s_i=\lvert a_i-c_i\rvert,\qquad
w_i=\mathrm{clip}\big(1+\kappa\,(s_i-\bar s_{\mathrm{group}}),\ w_-,\ w_+\big),
\qquad \hat A^{\mathrm{ans}}_i\;\leftarrow\;w_i\,\hat A^{\mathrm{ans}}_i,
\end{equation}
where $s_i=|a_i-c_i|$ is the discrepancy, $\kappa\ge 0$ sets the strength,
$\bar s_{\mathrm{group}}$ is its group mean, and $w_\pm$ are clipping bounds
with $w_->0$.
The standardization uses the unweighted rewards, and the weights have group
mean one before clipping, so the weighting redistributes learning within the group.
Calibration information thus becomes a per-sample signal inside the training
loop.

The training objective is
\begin{equation}
\label{eq:total}
\mathcal{L}\;=\;-\,\mathcal{J}\;+\;\alpha\,\mathcal{L}_{\mathrm{cal}},\qquad
\hat A_{i,k}\;=\;\mathbb{1}[k\in\mathrm{ans}_i]\,w_i\hat A^{\mathrm{ans}}_i
\;+\;\mathbb{1}[k\in\mathrm{conf}_i]\,\hat A^{\mathrm{conf}}_i,
\end{equation}
where $\mathcal{J}$ is the objective of~\eqref{eq:grpo}, $\alpha$ weights the
calibration loss, and the per-token advantage $\hat A_{i,k}$
of~\eqref{eq:grpo} now takes the form on the right.

\section{Experiments}
\label{sec:experiments}

We compare the methods in and out of domain
(\S\ref{sec:mainresults}), locate the gain by removing one component at a time
(\S\ref{sec:attribution}), and measure what the calibration is worth to a user
who can abstain (\S\ref{sec:downstream}).

\subsection{Experimental setup}
\label{sec:setup-exp}

\textbf{Datasets and metrics.}
Mathematics trains on DeepScaleR \citep{deepscaler2025} and is evaluated on
seven sets: its in-distribution split, MATH-500
\citep{hendrycks2021measuring,lightman2024lets}, AIME 2024--2026, and AMC
2023--2024. Code trains on DeepCoder \citep{deepcoder2025} and is evaluated on
four: its in-distribution split, LiveCodeBench v5 and v6
\citep{jain2025livecodebench}, and HumanEval+ \citep{liu2023your}. We report
accuracy and three calibration measures: ECE \citep{guo2017} on the
reported values, AUROC on the ranking they induce, and Brier
\citep{brier1950} on both.

\textbf{Baselines and training.}
On Qwen3-8B \citep{qwen3} we compare the untrained base model with GRPO
\citep{grpo}, which trains no calibration term, and with RLCR \citep{rlcr} and
DCPO \citep{dcpo}, which train one. Training runs for two
epochs at learning rate $2\times10^{-6}$ with eight rollouts per question and
temperature 1.0. Every run uses seed 43, and the main comparisons add seeds 44
and 45.

Appendix~\ref{app:setup} provides prompt templates, training hyperparameters, the evaluation protocol, and other implementation details.

\subsection{Accuracy and calibration in and out of domain}
\label{sec:mainresults}
\begin{table}[t]
\centering
\setlength{\tabcolsep}{4pt}
\scriptsize
\setlength{\fboxsep}{1.5pt}
\caption{Accuracy and calibration in and out of domain, each entry
macro-averaged over evaluation sets and training seeds. Column groups are the evaluation domain. Baselines are read through their
own verbalized numeral and again through a digit expectation (Appendix~\ref{app:digitexp}), in the
rows \colorbox{digitbg}{named \texttt{-d}}; \colorbox{oursbg}{CREDO} is read
through its readout. \textbf{Bold} marks the best mean in each column and any
within one joint standard error of it.}
\label{tab:main}
\vspace{0.4em}
\begin{tabular}{llrrrrrrrr}
\toprule
& & \multicolumn{4}{c}{Mathematics} & \multicolumn{4}{c}{Code}\\
\cmidrule(lr){3-6}\cmidrule(lr){7-10}
& Method & Acc\,$\uparrow$ & ECE\,$\downarrow$ & AUROC\,$\uparrow$ & Brier\,$\downarrow$
& Acc\,$\uparrow$ & ECE\,$\downarrow$ & AUROC\,$\uparrow$ & Brier\,$\downarrow$\\
\midrule
& Base   & .439\sd{.001} & .472\sd{.002} & .644\sd{.004} & .453\sd{.002} & .455\sd{.001} & .418\sd{.001} & .690\sd{.004} & .395\sd{.002}\\
& GRPO   & .706\sd{.009} & .199\sd{.027} & .775\sd{.062} & .204\sd{.029} & .581\sd{.007} & .182\sd{.070} & .753\sd{.061} & .205\sd{.066}\\
& RLCR   & .682\sd{.010} & .159\sd{.012} & .831\sd{.009} & .171\sd{.009} & .553\sd{.013} & \textbf{.075}\sd{.005} & .815\sd{.014} & .146\sd{.006}\\
& DCPO   & .722\sd{.008} & .217\sd{.008} & .848\sd{.012} & .198\sd{.003} & .576\sd{.003} & .153\sd{.024} & .776\sd{.025} & .153\sd{.024}\\
\rowcolor{digitbg} & Base-d & .439\sd{.001} & .442\sd{.001} & .765\sd{.010} & .424\sd{.001} & .455\sd{.001} & .389\sd{.002} & .782\sd{.003} & .373\sd{.002}\\
\rowcolor{digitbg} & GRPO-d & .706\sd{.009} & .192\sd{.025} & .864\sd{.020} & .203\sd{.023} & .581\sd{.007} & .155\sd{.022} & .833\sd{.034} & .184\sd{.025}\\
\rowcolor{digitbg} & RLCR-d & .682\sd{.010} & .163\sd{.005} & .852\sd{.006} & .171\sd{.009} & .553\sd{.013} & .080\sd{.004} & .818\sd{.011} & .145\sd{.005}\\
\rowcolor{digitbg} & DCPO-d & .722\sd{.008} & .188\sd{.010} & .858\sd{.006} & .191\sd{.013} & .576\sd{.003} & .152\sd{.025} & \textbf{.849}\sd{.027} & .152\sd{.025}\\
\rowcolor{oursbg} \multirow{-9}{*}{\textbf{\textsc{In domain}}} & CREDO & \textbf{.752}\sd{.013} & \textbf{.103}\sd{.013} & \textbf{.933}\sd{.007} & \textbf{.086}\sd{.004} & \textbf{.600}\sd{.018} & \textbf{.064}\sd{.020} & \textbf{.850}\sd{.027} & \textbf{.127}\sd{.016}\\
\midrule
& GRPO   & \textbf{.564}\sd{.003} & .335\sd{.019} & .734\sd{.030} & .325\sd{.020} & .495\sd{.004} & .315\sd{.041} & .721\sd{.032} & .308\sd{.040}\\
& RLCR   & .515\sd{.011} & .170\sd{.004} & .815\sd{.016} & .186\sd{.007} & \textbf{.502}\sd{.002} & .228\sd{.014} & .792\sd{.005} & .229\sd{.010}\\
& DCPO   & .548\sd{.003}& .205\sd{.021} & .755\sd{.024} & .205\sd{.020} & \textbf{.508}\sd{.014} & .330\sd{.040} & .746\sd{.038} & .318\sd{.032}\\
\rowcolor{digitbg} & GRPO-d & \textbf{.564}\sd{.003} & .311\sd{.017} & .826\sd{.004} & .308\sd{.015} & .495\sd{.004} & .298\sd{.028} & .792\sd{.026} & .299\sd{.028}\\
\rowcolor{digitbg} & RLCR-d & .515\sd{.011} & .172\sd{.006} & .833\sd{.014} & .184\sd{.006} & \textbf{.502}\sd{.002} & .232\sd{.013} & .816\sd{.003} & .226\sd{.011}\\
\rowcolor{digitbg} & DCPO-d & .548\sd{.003} & .203\sd{.020} & .835\sd{.004} & .203\sd{.020} & \textbf{.508}\sd{.014} & .275\sd{.027} & .778\sd{.021} & .283\sd{.017}\\
\rowcolor{oursbg} \multirow{-7}{*}{\textbf{\textsc{Out of domain}}} & CREDO & \textbf{.560}\sd{.007} & \textbf{.117}\sd{.030} & \textbf{.855}\sd{.013} & \textbf{.142}\sd{.013} & \textbf{.504}\sd{.021} & \textbf{.138}\sd{.016} & \textbf{.836}\sd{.009} & \textbf{.165}\sd{.008}\\
\bottomrule
\end{tabular}
\end{table}

Table~\ref{tab:main} compares each method three ways: in its own domain, with
only the readout replaced, and on the other domain with nothing adapted.

In domain, CREDO leads on all four metrics in both mathematics and code. Among the
baselines, DCPO is the most accurate on mathematics but still trails CREDO on calibration;
RLCR trades accuracy for a tighter ECE on code, where it is the only baseline to match CREDO on any calibration metric.
All baselines use the verbalized channel and incur both costs of
\S\ref{sec:costs} (Figure~\ref{fig:collapse}).

The \texttt{-d} rows replace the sampled numeral with the expectation over
each baseline's digit distribution, giving a continuous confidence without
retraining. Every swap improves ranking metrics such as AUROC,
confirming that sampling adds noise to an otherwise informative signal. None of
the swaps closes the calibration gap to CREDO: the reported values were shaped
by the training path, and changing the emission alone does not recalibrate them.

The \textsc{out of domain} rows move each checkpoint to the other domain.
Accuracy is comparable across methods, but CREDO's calibration
advantage persists and remains the largest.
RLCR was the only baseline to match CREDO's code ECE in domain; that match
does not survive the domain shift. The calibration advantage transfers across domains.

Appendix~\ref{app:scales} repeats the mathematics comparison at 1.7B and
4B. CREDO leads on all four metrics at every scale, and its calibration
advantage widens as models grow larger. Per-set breakdowns and calibration
curves appear in Appendices~\ref{app:results} and~\ref{app:calibcurves}.

\subsection{Ablation and attribution}
\label{sec:attribution}
\begin{table}[t]
\centering
\setlength{\tabcolsep}{4pt}
\scriptsize
\caption{Leave-one-out ablation of CREDO (seed 43).}
\label{tab:ablation}
\vspace{0.4em}
\begin{tabular}{lrrrrrrrr}
\toprule
& \multicolumn{4}{c}{Mathematics} & \multicolumn{4}{c}{Code}\\
\cmidrule(lr){2-5}\cmidrule(lr){6-9}
& Acc\,$\uparrow$ & ECE\,$\downarrow$ & AUROC\,$\uparrow$ & Brier\,$\downarrow$
& Acc\,$\uparrow$ & ECE\,$\downarrow$ & AUROC\,$\uparrow$ & Brier\,$\downarrow$\\
\midrule
\rowcolor{oursbg} CREDO & .753 & .094 & .933 & .088 & .614 & .052 & .869 & .117\\
w/o differentiable regression ($\alpha{=}0$) & .696 & .241 & .880 & .232 & .611 & .172 & .857 & .172\\
w/o uncertainty-analysis segment & .709 & .096 & .902 & .107 & .587 & .048 & .868 & .117\\
w/o confidence-segment advantage ($\hat A^{\mathrm{conf}}{\equiv}0$) & .696 & .099 & .909 & .102 & .576 & .047 & .867 & .126\\
w/o discrepancy weighting ($\kappa{=}0$) & .721 & .122 & .904 & .102 & .572 & .051 & .843 & .132\\
\bottomrule
\end{tabular}
\end{table}

\begin{figure}[t]
\centering
\includegraphics[width=\textwidth]{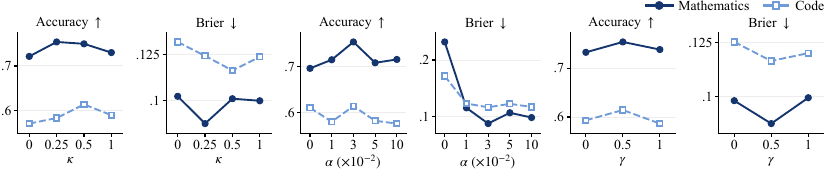}
\caption{Accuracy and Brier for the two evaluation domains as each
coefficient ($\kappa$, $\alpha$, $\gamma$) of \S\ref{sec:credo} is varied on seed 43, the others held fixed.
Exact values are in Table~\ref{tab:sens}.}
\label{fig:sensitivity}
\end{figure}

Table~\ref{tab:ablation} removes one component at a time. Setting $\alpha{=}0$
disables the calibration loss entirely: ECE and Brier degrade sharply in both
domains, and accuracy drops on mathematics. Without differentiable regression
the readout loses its direct supervision and calibration collapses. The next two rows
restore the regression but remove the confidence reasoning: dropping the
analysis text, or keeping it but withholding its policy-gradient credit, both
lower accuracy in both domains while calibration is largely unaffected. This
suggests that reasoning about confidence improves the model's answers, not only
its confidence estimates. Appendix~\ref{app:decompose} goes further,
independently swapping the emission and training path; neither axis alone
recovers the full method's performance.

Discrepancy weighting is the only component feeding the readout back into
the answer segment. Removing it causes the largest accuracy drop on code and a
substantial drop on mathematics. Calibration also degrades on mathematics,
though the weighting acts only on the answer advantage. The confidence and
answer channels are therefore not independent: the readout
shapes how the model learns from its answers.
Appendix~\ref{app:sw_baseline} adds the same weighting to RLCR and DCPO,
driven by the parsed numeral or a continuous digit expectation; in
neither case are the gains as large or as consistent as CREDO's.

Figure~\ref{fig:sensitivity} varies each coefficient in turn. Any moderate
discrepancy weight $\kappa$ outperforms $\kappa{=}0$, confirming that the
weighting itself, not just the readout, contributes to accuracy. The calibration
weight $\alpha$ has a threshold effect: $\alpha{=}0$ disables calibration
training and both ECE and Brier collapse, while all tested positive values restore
calibration. The target mixture $\gamma$ trades variance against shrinkage (\S\ref{sec:calloss});
among tested values, $\gamma{=}0.5$ gives the best accuracy and Brier in both domains, though $\gamma{=}1$ achieves lower code ECE.
Appendix~\ref{app:rewardshape} compares two alternative reward shapes;
the additive form is the best or tied in mathematics and competitive in code.

\subsection{Downstream benefits of calibration}
\label{sec:downstream}

\begin{figure}[t]
\centering
\begin{minipage}[t]{0.49\textwidth}
\vspace{0pt}
\centering
\captionof{table}{AURC, selective accuracy at three coverages, and the number of
operable thresholds.}
\label{tab:riskcov}
\vspace{0.4em}
\scriptsize
\renewcommand{\arraystretch}{1.12}
\setlength{\tabcolsep}{2.5pt}
\begin{tabular}{lrrrrr}
\toprule
& AURC\,(\%)\,$\downarrow$ & acc@50\,$\uparrow$ & acc@80\,$\uparrow$ & acc@90\,$\uparrow$ & Thr.\,$\uparrow$\\
\midrule
\multicolumn{6}{l}{\textsc{Mathematics}}\\
\rowcolor{oursbg} CREDO & \textbf{2.62}\sd{0.04} & \textbf{.986}\sd{.001} & \textbf{.962}\sd{.001} & \textbf{.927}\sd{.002} & \textbf{372}\\
DCPO & 4.98\sd{0.65} & .958\sd{.006} & .941\sd{.002} & .909\sd{.005} & 22\\
RLCR & 5.88\sd{0.67} & .952\sd{.009} & .910\sd{.009} & .878\sd{.007} & 15\\
GRPO & 7.92\sd{0.67} & .927\sd{.003} & .915\sd{.023} & .888\sd{.011} & 21\\
Base & 29.64\sd{0.30} & .712\sd{.001} & .705\sd{.005} & .666\sd{.001} & 34\\
\midrule
\multicolumn{6}{l}{\textsc{Code}}\\
\rowcolor{oursbg} CREDO & \textbf{16.80}\sd{1.57} & \textbf{.885}\sd{.013} & \textbf{.686}\sd{.025} & \textbf{.617}\sd{.019} & \textbf{932}\\
DCPO & 21.26\sd{0.95} & .863\sd{.016} & \textbf{.679}\sd{.003} & \textbf{.611}\sd{.003} & 10\\
RLCR & 21.36\sd{1.00} & .809\sd{.015} & .608\sd{.014} & .562\sd{.014} & 12\\
GRPO & 21.50\sd{3.08} & .797\sd{.081} & .652\sd{.041} & .599\sd{.023} & 17\\
Base & 45.10\sd{0.50} & .572\sd{.005} & .478\sd{.002} & .437\sd{.002} & 16\\
\bottomrule
\end{tabular}
\end{minipage}\hfill
\begin{minipage}[t]{0.49\textwidth}
\vspace{0pt}
\centering
\includegraphics[width=\textwidth]{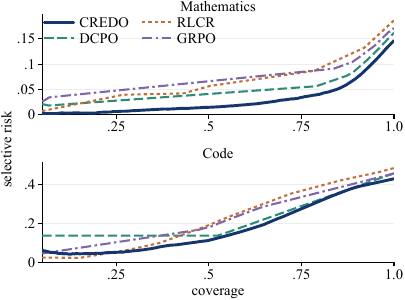}
\caption{Risk--coverage frontiers, pooled over three seeds; Base omitted.}
\label{fig:riskcov}
\end{minipage}
\end{figure}

Figure~\ref{fig:riskcov} plots the error among answered questions as the model
keeps only its most confident fraction \citep{geifman2017}. In mathematics CREDO
is below every baseline throughout. In code RLCR is lower at very low coverage; above
it CREDO is lowest. At the evaluated coverage budgets
(Table~\ref{tab:riskcov}), CREDO answers the most accurately and has the lowest
aggregate error in both domains; only the two tightest code budgets are
statistically tied with DCPO.
Selective prediction relies on setting a confidence threshold, and the channel's granularity determines how finely it can be tuned.
Verbalized reports concentrate on a few values
(Figure~\ref{fig:collapse}), leaving a threshold policy only a dozen or two
operating points (Table~\ref{tab:riskcov}); the readout supplies hundreds. On
code, DCPO's most confident value already covers half the pool.

\section{Related work}
\label{sec:related}

\paragraph{Confidence estimation.}
The most direct way to obtain a confidence estimate is to ask for one:
\citet{lin2022teaching} fine-tune a model to state a number, but such numbers
are systematically overconfident \citep{xiong2024llms,tian2023just}.
Instead of asking, the confidence can be read from the model's own probabilities
or internal representations
\citep{fomicheva2020,kadavath2022language,azaria2023internal,orgad2025llms}, or
estimated from sample agreement
\citep{kuhn2023semantic,aichberger2025improving,geng2024survey}.
A dedicated readout can also be trained directly: \citet{selfref} learn a
confidence-token readout under supervised finetuning, applied to routing
and rejection. CREDO trains the same form of readout on-policy inside
RLVR, against a hybrid calibration target, and feeds it back as a signal
for answer learning.

\paragraph{Calibration in RLVR.}
Inside RLVR, one line of work incorporates confidence derived from response
likelihood or token-level statistics into policy optimization
\citep{capo2026,ccgspg,egpo}. Another trains the model to state a numerical
confidence and scores it with a reward.
RLCR \citep{rlcr} folds a proper scoring rule into the reward, while
DCPO \citep{dcpo} separates credit by scoring answer and confidence tokens; other work extends to question answering, preference
tuning, and generative recommendation
\citep{sayself,rewardingdoubt,lacie2024,leng2025,ugr}.
In the numerical-report methods, the confidence is sampled as text and
trained through a scalar reward (\S\ref{sec:channels}).
A dedicated readout can also serve as a self-reward signal:
LaSeR \citep{laser} aligns a last-token score with verifier rewards
during RLVR and mixes score-derived advantages into policy updates.
CREDO targets calibration rather than self-reward: it regresses a
normalized two-token readout toward a hybrid calibration target and
uses confidence--outcome discrepancies to rescale answer advantages.

\section{Conclusion}
\label{sec:conclusion}

In this work, we examined the confidence channel shared by all verbalized
calibration methods. We analyzed the variance cost of sampling the report,
observed that it collapses to a handful of values in the baselines we
evaluated, and showed that sampling forces the calibration loss through a
scalar reward.
We therefore proposed CREDO, which reads the confidence as a deterministic
function of the output distribution and trains it by differentiable regression,
while the policy gradient optimizes correctness.
The same readout also serves as a training signal: it weights learning on the
answer by how far each outcome departs from the model's own confidence, and
this improves accuracy in both domains.
On mathematical and code reasoning tasks CREDO attains the best accuracy and
calibration, and its confidence translates into finer abstention control.
Because CREDO trains confidence by regression, it is not tied to binary
correctness labels; open-ended tasks with graded reward models are a natural
next step.
Looking further, feeding the confidence signal back during generation would let
uncertain steps be reconsidered before the answer is finalized, moving toward
reasoners that know when they are right.

\markmainend

\bibliography{references}
\bibliographystyle{iclr2027_conference}

\clearpage
\appendix
\section{Proofs}
\label{app:proofs}

For the verbalized channel, $h$ denotes the rollout prefix up to the confidence
slot, and the report is $\hat q=\mathrm{parse}(s)$ for a numeral
$s\sim\pi_\theta(\cdot\mid h)$ sampled from the policy, taking values in a
finite set $\mathcal{Q}\subset[0,1]$. Expectations are conditional on $h$ and
on the other rollouts of the group, which fixes any target $t\in[0,1]$
determined by them.

\subsection{Proof of Proposition~\texorpdfstring{\ref{prop:variance}}{1}}

\emph{Fix a rollout prefix and let $\hat q$ be the parsed report
of~\eqref{eq:verb}, with conditional mean $\bar q$ and variance $v$. Every
verbalized policy either pays a per-prefix tax that persists as long as the report is stochastic, or
determinizes its per-prefix report distribution.
Under the squared loss the tax is exactly $v$, since $\mathbb{E}[(\hat
q-t)^2]=(\bar q-t)^2+v$ for any target $t\in[0,1]$; under any strictly convex
loss $\ell$, Jensen's inequality gives
$\mathbb{E}[\ell(\hat q,t)]-\ell(\bar q,t)>0$.}

\begin{proof}
The tax is the excess of scoring the sampled report over scoring the same
policy's conditional mean, $\tau=\mathbb{E}[\ell(\hat q,t)]-\ell(\bar q,t)$.

Under the squared loss, $t$ and $\bar q$ are fixed given $h$, and expanding
about $\bar q$ gives
\begin{equation}
\mathbb{E}\big[(\hat q-t)^2\big]=\mathbb{E}\big[(\hat q-\bar q)^2\big]
+2(\bar q-t)\,\mathbb{E}\big[\hat q-\bar q\big]+(\bar q-t)^2=v+(\bar q-t)^2 ,
\end{equation}
where the middle term vanishes. So the tax is the conditional variance itself,
$\tau=v$, and it does not involve the target. Changing calibration moves
$\bar q$, and with it only the $(\bar q-t)^2$ term.

For a strictly convex $\ell$, Jensen's inequality gives $\mathbb{E}[\ell(\hat
q,t)]\ge\ell(\bar q,t)$, with equality only if $\hat q$ is almost surely
constant given $h$.

Both branches follow. A non-degenerate per-prefix report distribution pays
$\tau>0$. A degenerate one concentrates the report on a single element of
$\mathcal{Q}$ and pays nothing --- the determinized case.
\end{proof}

\subsection{Proof of Proposition~\texorpdfstring{\ref{prop:gradient}}{2}}

\emph{Fix a calibration loss $\ell$. A sampled report admits no unbiased pathwise
gradient estimator, so $\ell$ can be queried only through its value. If instead
the report $c$ is a differentiable function of the parameters at a single
position, the exact derivative $\frac{\partial\ell}{\partial c}\nabla_\theta c$
is available, with zero estimator variance given the rollout. Exact pathwise
training also requires the report to be computable inside the forward pass that
generates the rollout. A verbalized numeral does not satisfy this: its realized
value is a sample, and its conditional mean requires marginalizing over the
numeral span.}

\begin{proof}
A pathwise estimator \citep{kingma2014} writes the draw as $s=S_\theta(\xi)$,
with noise $\xi$
whose law does not depend on $\theta$, and differentiates the composition,
$\frac{\partial\ell}{\partial\hat q}\nabla_\theta\,\mathrm{parse}(S_\theta(\xi))$.
The map $\theta\mapsto\mathrm{parse}(S_\theta(\xi))$ takes values in the finite
set $\mathcal{Q}$, so it is locally constant wherever it is continuous and its
gradient is zero wherever it exists. The estimator is therefore zero, while the
gradient
\begin{equation}
\nabla_\theta\,\mathbb{E}\big[\ell(\hat q,t)\big]
=\sum_{s}\ell\big(\mathrm{parse}(s),t\big)\,\nabla_\theta\pi_\theta(s\mid h)
\end{equation}
is not whenever two values in $\mathcal{Q}$ carry different losses and the
policy can shift probability between them. In this identity $\ell$ enters only
through its values on $\mathcal{Q}$, never through
$\partial\ell/\partial\hat q$. Equivalently, the same gradient has the
score-function form \citep{williams1992}
$\mathbb{E}[\ell(\hat q,t)\,\nabla_\theta\log\pi_\theta(s\mid h)]$, the
left-hand branch of~\eqref{eq:estimators}.

Now let $c$ be differentiable in $\theta$ at a single position, with $\ell$
differentiable in the report. Given the rollout, $c$ is a deterministic
function of $\theta$, so the chain-rule derivative
$\nabla_\theta\ell(c,t)=\frac{\partial\ell}{\partial c}\nabla_\theta c$ is
exact and carries no variance.

Exact pathwise training also needs a differentiable value inside the pass that
generates the rollout, and a verbalized numeral has none. Its realized value is
a sample, which carries no pathwise derivative. Its conditional mean,
\begin{equation}
\bar q=\sum_s\pi_\theta(s\mid h)\,\mathrm{parse}(s) ,
\end{equation}
is differentiable in $\theta$, but the sum ranges over every string the numeral
span can produce, because $\mathrm{parse}$ couples the span's positions. The
generating pass produces one of those strings, not the sum. A report read at a
single position is instead a function of that position's logits, which the same
pass has already computed.
\end{proof}

\subsection{Proof of Proposition~\texorpdfstring{\ref{prop:fixedpoint}}{3}}

\emph{Fix the rollout policy and assume independent rollouts given~$x$.
Let $\mathcal{I}$ comprise $x$ and this rollout's prefix up to the readout,
and $\mu=\mathbb{E}[a\mid\mathcal{I}]$ the conditional correctness probability.
The Bayes-optimal readout for the unfiltered per-rollout risk of~\eqref{eq:calloss} is
$c^\ast=\lambda\mu+(1-\lambda)\,p(x)$,
$\lambda=\gamma+(1{-}\gamma)/G$, and $\mathbb{E}[c^\ast\mid x]=p(x)$
for every~$\gamma$.}

\begin{proof}
We analyze the unfiltered per-rollout population risk
$R(c)=\mathbb{E}[(c(\mathcal{I})-t)^2]$, which coincides with the expected
loss of~\eqref{eq:calloss} when every rollout has an available readout.
Expectations are over the group drawn at a question $x$, and
$p(x)=\mathbb{E}[a\mid x]$ denotes the current policy's accuracy on it.
The target is $t=\gamma a+(1-\gamma)\bar a_{\mathrm{group}}$, with
$\bar a_{\mathrm{group}}$ the mean correctness over all $G$ rollouts. Each
squared error is minimized pointwise by the conditional mean of its target, so
the best readout among functions of $\mathcal{I}$ is
$c^\ast=\mathbb{E}[t\mid\mathcal{I}]$.

By definition $\mathbb{E}[a\mid\mathcal{I}]=\mu$. The other rollouts are drawn
independently given $x$, and $\mathcal{I}$ is a function of $x$ and this
rollout alone, so each of the other $G-1$ scores has conditional mean $p(x)$,
and
\begin{equation}
\mathbb{E}\big[\bar a_{\mathrm{group}}\mid\mathcal{I}\big]
=\frac{\mu}{G}+\frac{G-1}{G}\,p(x)=p(x)+\frac{\mu-p(x)}{G} .
\end{equation}
Therefore
\begin{equation}
c^\ast=\gamma\,\mu+(1-\gamma)\,p(x)+(1-\gamma)\,\frac{\mu-p(x)}{G}
=\lambda\mu+(1-\lambda)\,p(x),\qquad
\lambda=\gamma+\frac{1-\gamma}{G}.
\end{equation}
Averaging over rollouts at the same question, $\mathbb{E}[\mu\mid
x]=\mathbb{E}[\mathbb{E}[a\mid\mathcal{I}]\mid x]=p(x)$, so
$\mathbb{E}[c^\ast\mid x]=\lambda\,p(x)+(1-\lambda)\,p(x)=p(x)$ for
every~$\gamma$.
\end{proof}

\paragraph{Target variance and shrinkage bias.}

Writing $t_i=\lambda a_i+\frac{1-\gamma}{G}\sum_{j\ne i}a_j$ and using
conditional independence given~$x$:

\emph{Target variance.}
\begin{equation}
\operatorname{Var}(t_i\mid x)
=\left[\gamma^2+\frac{1-\gamma^2}{G}\right]p(x)\bigl(1-p(x)\bigr).
\end{equation}
At $G{=}8$ and $\gamma{=}0.5$ the bracketed coefficient is $0.344$, compared
with~$1$ at $\gamma{=}1$.
This is a reduction in marginal target variance conditional on the question;
it does not by itself establish lower prediction error or gradient variance.

\emph{Shrinkage bias.}  Relative to the conditional correctness
probability~$\mu$,
\begin{equation}
c^\ast_\gamma-\mu=(1-\lambda)\bigl(p(x)-\mu\bigr).
\end{equation}
For fixed $\gamma{<}1$, the limit as $G\to\infty$ is
$(1{-}\gamma)(p(x){-}\mu)$: increasing the group size does not generally
eliminate the bias.

\emph{Brier cost.}  Because $\mu$ minimizes Brier risk over
$\mathcal{I}$-measurable predictors and the cross term vanishes
($\mathbb{E}[a{-}\mu\mid\mathcal{I}]=0$),
\begin{equation}
\mathbb{E}\bigl[(c^\ast_\gamma-a)^2\bigr]
-\mathbb{E}\bigl[(\mu-a)^2\bigr]
=(1-\lambda)^2\,\mathbb{E}\bigl[(\mu-p(x))^2\bigr].
\end{equation}
The hybrid target thus reduces target variance conditional on~$x$ while
introducing a population-level Brier cost relative to~$\mu$.
Whether this regularization improves the learned readout under finite data
and limited optimization is an empirical question
(\S\ref{sec:attribution}).

\section{Experimental setup and implementation details}
\label{app:setup}

\subsection{Model and prompts}
\label{app:model}

All experiments use Qwen3-8B \citep{qwen3}. The reserved tokens $H$ and $L$ of
\S\ref{sec:readout} are added to the vocabulary as special tokens, with
embeddings copied from the rows of ``high'' and ``low''.

The system prompts of the two channels fix the three-segment format of
\S\ref{sec:readout}: they differ only in the slot instruction, and the coding
versions differ only in carrying the answer in a \texttt{python} block.

\begin{center}
\fcolorbox{promptedge}{promptbg}{%
\begin{minipage}{0.94\textwidth}
\small\raggedright
\textbf{Verbalized channel.}\ A conversation between User and Assistant. The
user asks a question, and the Assistant solves it. Solve the problem step by
step, and put your final answer within \texttt{\textbackslash boxed\{\}}. After
the final answer, analyze the uncertainty of your solution within
\texttt{<analysis>} \texttt{</analysis>} tags. This analysis is the basis for
the confidence level you will report next, and must follow these rules: (1)
point out specific steps that could be wrong or ambiguous, including alternative
approaches that might lead to different answers; (2) do not solve the problem
again and do not revise or change your answer; (3) be specific --- if you cannot
find more uncertainties, say so explicitly. Then provide your confidence that
the final answer is correct, as a decimal number between 0 and 1 (e.g.\ 0.3 or
0.8), within \texttt{<confidence>} \texttt{</confidence>} tags. The final format
that must be followed is: \{step-by-step solution with
\texttt{\textbackslash boxed\{final answer\}}\} \texttt{<analysis>} uncertainty
analysis here \texttt{</analysis>} \texttt{<confidence>} confidence here
\texttt{</confidence>}
\end{minipage}}
\end{center}

\begin{center}
\fcolorbox{promptedge}{promptbg}{%
\begin{minipage}{0.94\textwidth}
\small\raggedright
\textbf{Readout channel (CREDO).}\ A conversation between User and Assistant.
The user asks a question, and the Assistant solves it. Solve the problem step by
step, and put your final answer within \texttt{\textbackslash boxed\{\}}. After
the final answer, analyze the uncertainty of your solution within
\texttt{<analysis>} \texttt{</analysis>} tags. This analysis is the basis for
the confidence level you will report next, and must follow these rules: (1)
point out specific steps that could be wrong or ambiguous, including alternative
approaches that might lead to different answers; (2) do not solve the problem
again and do not revise or change your answer; (3) be specific --- if you cannot
find more uncertainties, say so explicitly. Then, within \texttt{<confidence>}
\texttt{</confidence>} tags, output exactly one token: \texttt{<CONF\_HIGH>} if
your final answer is more likely correct than not, otherwise
\texttt{<CONF\_LOW>}. The final format that must be followed is:
\{step-by-step solution with \texttt{\textbackslash boxed\{final answer\}}\}
\texttt{<analysis>} uncertainty analysis here \texttt{</analysis>}
\texttt{<confidence>}\texttt{<CONF\_HIGH>}\texttt{</confidence>}
\end{minipage}}
\end{center}

\subsection{Training}
\label{app:training}

\begin{center}
\begin{minipage}{\textwidth}
\centering\small
\captionof{table}{Training configuration. Values that differ by domain are given
as mathematics / code. Rollouts are generated with vLLM \citep{vllm}.}
\label{tab:hyper}
\vspace{0.4em}
\setlength{\tabcolsep}{5pt}
\begin{tabular}{@{}ll@{\hspace{2.0em}}ll@{}}
\toprule
\multicolumn{4}{@{}l}{\textsc{Shared}}\\
\midrule
Base model & Qwen3-8B & Precision & bf16\\
Seeds & $43$, $44$, $45$ & Rollouts per question $G$ & $8$\\
Learning rate & $2\times10^{-6}$ & Per-device batch size & $2$\\
Schedule & constant, no warmup & Gradient accumulation & $64$\\
Optimizer & AdamW \citep{adamw} & Epochs & $2$\\
Rollout temperature & $1.0$ & Rollout top-$k$ & $50$\\
Prompt budget & $1{,}024$ / $2{,}048$ & Clipping range $\epsilon$ & $0.2$\\
Completion budget & $8{,}192$ / $6{,}144$ & KL coefficient & $0$\\
\midrule
\multicolumn{4}{@{}l}{\textsc{CREDO}}\\
\midrule
Target mixture $\gamma$ & $0.5$ & Weighting strength $\kappa$ & $0.25$ / $0.50$\\
Calibration weight $\alpha$ & $0.03$ & Weight clip $[w_-,w_+]$ & $[0.5,\,2.0]$\\
Format weight $\beta$ & $0.5$ & Warmup for $\alpha$ and $\kappa$ & $30$ steps\\
\bottomrule
\end{tabular}
\end{minipage}
\end{center}

\subsection{Evaluation and metrics}
\label{app:data}

Mathematics trains on DeepScaleR, code on DeepCoder, and each domain is
evaluated on the sets of Table~\ref{tab:data}. We evaluate at temperature
$0.7$, generating each trained run under its own training seed; the base model
has no training seed, so its error bars come from the three evaluation seeds.

\begin{center}
\begin{minipage}{\textwidth}
\centering\small
\captionof{table}{Datasets. $n$ is the number of samples drawn per problem at
evaluation. AIME and AMC are the corresponding years' competition problems;
HumanEval+ strengthens the tests of HumanEval \citep{chen2021evaluating}.}
\label{tab:data}
\vspace{0.4em}
\setlength{\tabcolsep}{5pt}
\begin{tabular}{@{}lrc@{\hspace{2.4em}}lrc@{}}
\toprule
\multicolumn{3}{c}{Mathematics} & \multicolumn{3}{c}{Code}\\
\cmidrule(lr){1-3}\cmidrule(lr){4-6}
Set & Problems & $n$ & Set & Problems & $n$\\
\midrule
DeepScaleR (train) & $9{,}500$ & --- & DeepCoder (train) & $9{,}500$ & ---\\
DeepScaleR test & $500$ & 4 & DeepCoder test & $500$ & 4\\
MATH-500 & $500$ & 4 & HumanEval+ & $163$ & 4\\
AIME 2024 & $30$ & 8 & LiveCodeBench v5 & $279$ & 4\\
AIME 2025 & $30$ & 8 & LiveCodeBench v6 & $131$ & 4\\
AIME 2026 & $30$ & 8 & & &\\
AMC 2023 & $46$ & 8 & & &\\
AMC 2024 & $45$ & 8 & & &\\
\bottomrule
\end{tabular}
\end{minipage}
\end{center}

Write $\{(c_i,a_i)\}_{i=1}^{n}$ for an evaluation pool, with $c_i$ the reported
confidence and $a_i\in\{0,1\}$ the correctness, decided by symbolic equivalence
in mathematics \citep{mathverify} and by executing the program against the
problem's tests in code.

\paragraph{Digit-probability expectation.}
\label{app:digitexp}
The \texttt{-d} rows throughout the paper replace the
sampled numeral with a continuous value derived from the model's token
probabilities at the confidence slot. Given the verbalized format
\texttt{<confidence>0.\textit{d}</confidence>}, we teacher-force the prefix up
to the confidence position and read the logits at two slots:
\begin{itemize}[leftmargin=1.5em,itemsep=1pt,topsep=2pt]
\item \emph{Integer position} (candidates ``0'' and ``1''), renormalized:
$\hat{p}_0,\;\hat{p}_1$.
\item \emph{Decimal position} (candidates ``0'' through ``9''), renormalized:
$\hat{q}_d$ for $d=0,\dots,9$.
\end{itemize}
The confidence is a mixture over the two integer outcomes:
\begin{equation}
c \;=\; \hat{p}_1 \cdot 1.0 \;+\; \hat{p}_0 \cdot
\frac{\mathbb{E}[d]}{10},
\qquad
\mathbb{E}[d] = \sum_{d=0}^{9} d\,\hat{q}_d\,.
\label{eq:digitexp}
\end{equation}
The mixture covers $[0,1]$ continuously. Probabilities are renormalized
within each candidate set, and the logits are read from the existing rollout
without an additional forward pass.

We report the following metrics over each evaluation pool:
\begin{itemize}[leftmargin=1.5em,itemsep=3pt,topsep=2pt]
\item \textbf{Accuracy} is the mean of $a_i$.
\item \textbf{ECE} uses ten equal-width bins $B_1,\dots,B_{10}$
\citep{naeini2015,guo2017}, with $0$ and $1$
falling in the first and last bin:
\begin{equation}
\mathrm{ECE}=\sum_{b=1}^{10}\frac{|B_b|}{n}\,
\big|\overline{a}_{B_b}-\overline{c}_{B_b}\big|,
\end{equation}
where $\overline{a}_B$ and $\overline{c}_B$ are the bin's mean correctness and
mean confidence.
\item \textbf{Brier} is the mean squared error, $\frac1n\sum_i(c_i-a_i)^2$
\citep{brier1950}.
\item \textbf{AUROC} is the area under the ROC curve \citep{hanley1982} obtained
by scoring $a$ with $c$.
\item \textbf{Selective prediction} (AURC, acc@$\tau$, Thr.)
\citep{geifman2017} answers questions
in decreasing order of $c$. Equal confidences cannot be separated by a
threshold, so the achievable coverages are the boundaries of the tie groups:
for the distinct values $v_1>\dots>v_K$ of $c$,
\begin{equation}
\mathrm{cov}_k=\frac{|\{i:c_i\ge v_k\}|}{n},
\qquad
\mathrm{acc}_k=\frac{1}{|\{i:c_i\ge v_k\}|}\sum_{i:\,c_i\ge v_k}a_i ,
\end{equation}
and Thr.\ is the number $K$ of achievable points. AURC integrates the risk
$1-\mathrm{acc}$ against coverage over $[0,1]$ by the trapezoid rule, extended
as a constant below $\mathrm{cov}_1$, and $\mathrm{acc}@\tau$ interpolates
$(\mathrm{cov}_k,\mathrm{acc}_k)$ linearly in coverage.
\end{itemize}

\section{Additional analyses}
\label{app:analyses}

\subsection{Additional model scales}
\label{app:scales}

The main table uses Qwen3-8B. Table~\ref{tab:scales} runs the same mathematics
comparison at 1.7B and 4B (seed~43), with the 8B seed-43 run included for
direct comparison; three-seed averages for 8B appear in Table~\ref{tab:main}.
Accuracy grows with model size for every method. CREDO leads on all four
metrics at every scale, and its calibration advantage widens as models grow
larger. The digit-expectation rows (-d) consistently outrank their verbalized
counterparts in AUROC at all three scales, indicating that sampling noise is not
specific to 8B.

\noindent\begin{minipage}{\textwidth}
\centering\scriptsize
\setlength{\tabcolsep}{3pt}
\captionof{table}{Mathematics results at three model scales (seed 43). All
methods follow the same protocol as Table~\ref{tab:main}.
\colorbox{digitbg}{\texttt{-d}} rows evaluate through the digit expectation;
\colorbox{oursbg}{CREDO} through its logit readout.
\textbf{Bold} marks the best in each column.}
\label{tab:scales}
\vspace{0.4em}
\begin{tabular}{@{}l rrrr rrrr rrrr @{}}
\toprule
& \multicolumn{4}{c}{Qwen3-1.7B} & \multicolumn{4}{c}{Qwen3-4B} & \multicolumn{4}{c}{Qwen3-8B}\\
\cmidrule(lr){2-5}\cmidrule(lr){6-9}\cmidrule(lr){10-13}
& Acc\,$\uparrow$ & ECE\,$\downarrow$ & AUROC\,$\uparrow$ & Brier\,$\downarrow$
& Acc\,$\uparrow$ & ECE\,$\downarrow$ & AUROC\,$\uparrow$ & Brier\,$\downarrow$
& Acc\,$\uparrow$ & ECE\,$\downarrow$ & AUROC\,$\uparrow$ & Brier\,$\downarrow$\\
\midrule
Base       & .268 & .531 & .691 & .477  & .425 & .468 & .641 & .447  & .440 & .474 & .640 & .455 \\
GRPO       & .460 & .391 & .782 & .357  & .660 & .221 & .813 & .220  & .717 & .174 & .820 & .179 \\
RLCR       & .494 & .145 & .853 & .155  & .663 & .186 & .836 & .185  & .682 & .167 & .837 & .181 \\
DCPO       & .501 & .203 & .823 & .189  & .651 & .220 & .797 & .196  & .715 & .222 & .846 & .195 \\
\rowcolor{digitbg} Base-d  & .268 & .557 & .717 & .495  & .425 & .446 & .765 & .428  & .440 & .443 & .772 & .425 \\
\rowcolor{digitbg} GRPO-d  & .460 & .379 & .806 & .350  & .660 & .208 & .851 & .211  & .717 & .170 & .886 & .184 \\
\rowcolor{digitbg} RLCR-d  & .494 & .152 & .867 & .154  & .663 & .191 & .866 & .190  & .682 & .164 & .855 & .180 \\
\rowcolor{digitbg} DCPO-d  & .501 & .214 & .873 & .191  & .651 & .183 & .871 & .186  & .715 & .184 & .852 & .180 \\
\rowcolor{oursbg}  CREDO   & \textbf{.514} & \textbf{.115} & \textbf{.905} & \textbf{.122}  & \textbf{.689} & \textbf{.116} & \textbf{.884} & \textbf{.128}  & \textbf{.753} & \textbf{.094} & \textbf{.933} & \textbf{.088} \\
\bottomrule
\end{tabular}
\end{minipage}

\vspace{1.5em}

\subsection{Emission and training-path decomposition}
\label{app:decompose}

Table~\ref{tab:ablation} removes one component at a time while keeping the rest
of CREDO intact. To disentangle the emission from the training path, we fill
the two remaining cells of the $2{\times}2$ design space
(Table~\ref{tab:designspace}) and add one fully decoupled reference. All arms
follow the protocol of Table~\ref{tab:main}.

\begin{center}
\small
\captionof{table}{Design space for the confidence channel.}
\label{tab:designspace}
\vspace{0.4em}
\setlength{\tabcolsep}{6pt}
\begin{tabular}{@{}lcc@{}}
\toprule
& Score-function reward & Differentiable regression\\
\midrule
Verbalized numeral & RLCR / DCPO (Table~\ref{tab:main}) & Digit-exp.\ emission\\
$\{H,L\}$ readout  & Readout + reward only              & CREDO (Table~\ref{tab:main})\\
\bottomrule
\end{tabular}
\end{center}

\begin{itemize}[leftmargin=1.5em,itemsep=1pt,topsep=2pt]
\item \textbf{Digit-expectation emission.} The confidence slot produces a
verbalized numeral, but the value used for training and evaluation is the
continuous expectation over the digit probabilities rather than the \{H,L\}
logit readout. The regression loss, target, and discrepancy weighting are all
kept; only the emission differs from CREDO.
\item \textbf{Readout + reward only.} The \{H,L\} readout is kept, but
differentiable regression is removed and the readout tokens are trained through
the reward signal via policy gradient, as in verbalized methods. Only the
training path differs from CREDO.
\item \textbf{GRPO + auxiliary head.} A standard GRPO model whose answer reward
uses accuracy alone. The confidence channel plays no role in policy training; a
separate BCE head at the readout position learns to predict per-rollout
correctness.
\end{itemize}

Changing either axis alone degrades performance. Replacing the dedicated token
pair with a digit-probability expectation while keeping the regression causes
the largest drop: the verbalized digit distribution, though continuous, provides
a weaker training signal. Keeping the readout but training it only through the
reward also hurts, which confirms that differentiable regression is what drives
the calibration gains. A standalone BCE head on top of GRPO provides a natural
lower bound; it achieves reasonable calibration, but CREDO outperforms it on
every metric---optimizing the readout inside the training loop works better than
attaching a head afterward.

\noindent\begin{minipage}{\textwidth}
\centering\scriptsize
\setlength{\tabcolsep}{4pt}
\captionof{table}{Emission and training-path decomposition (seed 43). \textbf{Bold} marks the best in each column.}
\label{tab:decompose}
\vspace{0.4em}
\begin{tabular}{@{}lrrrrrrrr@{}}
\toprule
& \multicolumn{4}{c}{Mathematics} & \multicolumn{4}{c}{Code}\\
\cmidrule(lr){2-5}\cmidrule(lr){6-9}
& Acc\,$\uparrow$ & ECE\,$\downarrow$ & AUROC\,$\uparrow$ & Brier\,$\downarrow$
& Acc\,$\uparrow$ & ECE\,$\downarrow$ & AUROC\,$\uparrow$ & Brier\,$\downarrow$\\
\midrule
\rowcolor{oursbg} CREDO (full)            & \textbf{.753} & \textbf{.094} & \textbf{.933} & \textbf{.088} & \textbf{.614} & \textbf{.052} & \textbf{.869} & \textbf{.117} \\
Digit-exp.\ emission                       & .676 & .251 & .671 & .218 & .587 & .180 & .724 & .229 \\
Readout + reward only                      & .695 & .262 & .869 & .252 & .588 & .187 & .818 & .186 \\
GRPO + auxiliary head                      & .702 & .106 & .902 & .111 & .599 & .125 & .832 & .155 \\
\bottomrule
\end{tabular}
\end{minipage}

\vspace{1.5em}

\subsection{Discrepancy weighting on verbalized baselines}
\label{app:sw_baseline}

Table~\ref{tab:ablation} shows that discrepancy weighting contributes
substantially to accuracy. A natural question is whether the same weighting
transfers to verbalized baselines. Table~\ref{tab:sw_baseline} adds it to RLCR
and DCPO, varying the source of the confidence value $c$ in the weight
$w \propto |a - c|$:
\begin{itemize}[leftmargin=1.5em,itemsep=1pt,topsep=2pt]
\item \textbf{Parsed numeral.} The weight uses each baseline's own parsed
confidence value. With only a handful of distinct values per group, rollouts
within a group share one of only a few distinct weights.
\item \textbf{Digit-probability expectation.} The weight uses the continuous
expectation over each baseline's own digit logits, so that rollouts within a
group receive distinct weights even when their verbalized reports coincide.
\end{itemize}

Adding discrepancy weighting to RLCR or DCPO does not reproduce the accuracy
improvement that CREDO obtains from the same mechanism
(Table~\ref{tab:ablation}), because the verbalized channel cannot supply the
per-rollout confidence variation the weighting needs. With parsed numerals,
collapsed values leave rollouts in a group sharing one of only a few
distinct weights, so the reweighting barely acts. Digit-expectation confidence gives the weights more
variation, but does not close the gap either: for DCPO, accuracy drops in both
domains, and calibration worsens in mathematics though it improves on
several code metrics. Whether the confidence is read as a parsed numeral or as a
continuous expectation, the verbalized channel does not give the weighting
enough signal to redistribute learning. CREDO's readout varies continuously
across rollouts, which lets the weighting concentrate updates on those where
confidence and outcome disagree most---and that is where the accuracy gains
come from.

\noindent\begin{minipage}{\textwidth}
\centering\scriptsize
\setlength{\tabcolsep}{4pt}
\captionof{table}{Verbalized baselines with discrepancy weighting (seed 43).
The Eval column indicates the confidence source at evaluation.
\colorbox{digitbg}{Shaded rows} evaluate through the digit-probability
expectation; \colorbox{oursbg}{CREDO} through its logit readout; unshaded rows
through the parsed verbalized numeral. \textbf{Bold} marks the best in each
column.}
\label{tab:sw_baseline}
\vspace{0.4em}
\begin{tabular}{@{}llrrrrrrrr@{}}
\toprule
& & \multicolumn{4}{c}{Mathematics} & \multicolumn{4}{c}{Code}\\
\cmidrule(lr){3-6}\cmidrule(lr){7-10}
& Eval & Acc\,$\uparrow$ & ECE\,$\downarrow$ & AUROC\,$\uparrow$ & Brier\,$\downarrow$
& Acc\,$\uparrow$ & ECE\,$\downarrow$ & AUROC\,$\uparrow$ & Brier\,$\downarrow$\\
\midrule
RLCR                        & verbalized & .682 & .167 & .837 & .181 & .539 & .070 & .820 & .144 \\
RLCR + weighting (parsed)   & verbalized & .701 & .142 & .837 & .152 & .545 & .104 & .818 & .155 \\
RLCR + weighting (digit-exp)& verbalized & .697 & .141 & .847 & .148 & .542 & .085 & .810 & .149 \\
\rowcolor{digitbg} RLCR                        & digit-exp  & .682 & .164 & .855 & .180 & .539 & .078 & .822 & .144 \\
\rowcolor{digitbg} RLCR + weighting (parsed)   & digit-exp  & .701 & .151 & .857 & .152 & .545 & .108 & .823 & .153 \\
\rowcolor{digitbg} RLCR + weighting (digit-exp)& digit-exp  & .697 & .152 & .866 & .147 & .542 & .090 & .813 & .147 \\
\midrule
DCPO                        & verbalized & .715 & .222 & .846 & .195 & .579 & .180 & .747 & .179 \\
DCPO + weighting (parsed)   & verbalized & .716 & .223 & .816 & .203 & .581 & .150 & .791 & .154 \\
DCPO + weighting (digit-exp)& verbalized & .690 & .277 & .780 & .262 & .560 & .153 & .801 & .153 \\
\rowcolor{digitbg} DCPO                        & digit-exp  & .715 & .184 & .852 & .180 & .579 & .179 & .842 & .178 \\
\rowcolor{digitbg} DCPO + weighting (parsed)   & digit-exp  & .716 & .187 & .865 & .187 & .581 & .147 & .850 & .151 \\
\rowcolor{digitbg} DCPO + weighting (digit-exp)& digit-exp  & .690 & .221 & .785 & .233 & .560 & .149 & .818 & .147 \\
\midrule
\rowcolor{oursbg} CREDO     & readout    & \textbf{.753} & \textbf{.094} & \textbf{.933} & \textbf{.088} & \textbf{.614} & \textbf{.052} & \textbf{.869} & \textbf{.117} \\
\bottomrule
\end{tabular}
\end{minipage}

\vspace{1.5em}

\subsection{Shape of the answer reward}
\label{app:rewardshape}

We compare the answer-segment reward with two variants of it:
\begin{itemize}[leftmargin=1.5em,itemsep=1pt,topsep=2pt]
\item \textbf{additive} (CREDO, \eqref{eq:rewards3}): $r^{\mathrm{ans}}=\beta F+a$;
\item \textbf{gated}: $r^{\mathrm{ans}}=F\,(\beta+a)$;
\item \textbf{no format term}: $r^{\mathrm{ans}}=a$.
\end{itemize}
The additive form is best or tied on every metric in mathematics, and neither
variant dominates it in code (Table~\ref{tab:rewardshape}).
Under the gated form, a format failure zeroes out the entire reward including
the correctness signal, which slows accuracy learning. Removing the format term
has a milder effect on accuracy but yields mixed results on calibration.

\noindent\begin{minipage}{\textwidth}
\centering\scriptsize
\setlength{\tabcolsep}{4pt}
\captionof{table}{Answer-reward shapes on seed 43; all else follows
Table~\ref{tab:hyper}.}
\label{tab:rewardshape}
\vspace{0.4em}
\begin{tabular}{@{}lrrrrrrrr@{}}
\toprule
& \multicolumn{4}{c}{Mathematics} & \multicolumn{4}{c}{Code}\\
\cmidrule(lr){2-5}\cmidrule(lr){6-9}
$r^{\mathrm{ans}}$ & Acc\,$\uparrow$ & ECE\,$\downarrow$ & AUROC\,$\uparrow$ &
Brier\,$\downarrow$ & Acc\,$\uparrow$ & ECE\,$\downarrow$ & AUROC\,$\uparrow$ &
Brier\,$\downarrow$\\
\midrule
\rowcolor{oursbg} $\beta F + a$
& .753 & .094 & .933 & .088 & .614 & .052 & .869 & .117\\
$F\,(\beta + a)$
& .714 & .105 & .910 & .110 & .589 & .038 & .871 & .113\\
$a$
& .723 & .113 & .924 & .091 & .583 & .042 & .889 & .102\\
\bottomrule
\end{tabular}
\end{minipage}

\vspace{1.5em}

\subsection{Calibration curves}
\label{app:calibcurves}

Figure~\ref{fig:reliability} shows calibration curves for all methods in both
domains. The verbalized baselines resolve only a handful of points, reflecting
the collapse documented in Figure~\ref{fig:collapse}: with few distinct
confidence values, each bin absorbs a large share of the evaluation pool. Most
of these points fall below the diagonal, indicating systematic overconfidence.
CREDO's readout, by contrast, populates the full confidence range and tracks the
diagonal closely. We use equal-mass bins here so that every marker represents a
comparable number of samples; the quantitative ECE in Table~\ref{tab:main} uses
the standard ten equal-width bins defined in Appendix~\ref{app:data}.

\begin{center}
\includegraphics[width=\textwidth]{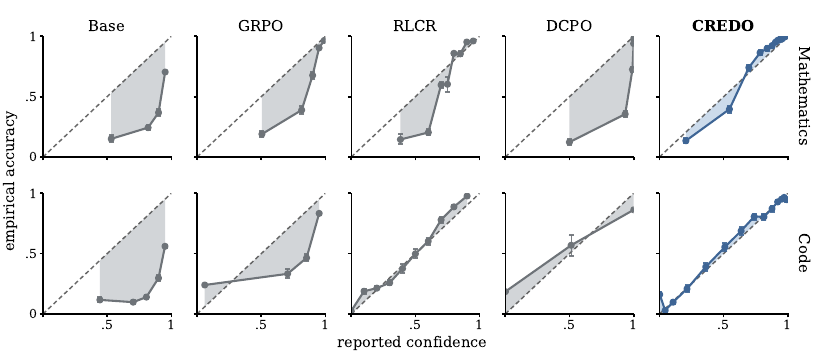}
\vspace{0.2em}
\captionof{figure}{Calibration curves, pooled over the three seeds and over
each domain's evaluation sets. Baselines report through their verbalized
numeral, CREDO through its readout. Equal-mass bins; bars are Wilson 95\%
intervals \citep{wilson1927}.}
\label{fig:reliability}
\end{center}

\section{Extended quantitative results}
\label{app:results}

\subsection{Per-set results}
\label{app:perset}

Tables~\ref{tab:persetmath}--\ref{tab:persetmathout} decompose
Table~\ref{tab:main} by evaluation set, in domain and under transfer. Entries
are mean $\pm$ std over the three seeds. The Average column reproduces
Table~\ref{tab:main}. DeepScaleR and DeepCoder name the in-distribution test
splits.

\noindent\begin{minipage}{\textwidth}
\centering\scriptsize
\setlength{\tabcolsep}{2.6pt}
\captionof{table}{Per-set results on mathematics.}
\label{tab:persetmath}
\vspace{0.4em}
\begin{tabular}{@{}l*{8}{r}@{}}
\toprule
& DeepScaleR & MATH-500 & AIME-24 & AIME-25 & AIME-26 & AMC-23 & AMC-24 & Average\\
\cmidrule(lr){2-9}
\midrule
\multicolumn{9}{@{}l}{\textsc{Accuracy} $\uparrow$}\\
\midrule
Base & .603\sd{.005} & .821\sd{.004} & .240\sd{.013} & .172\sd{.009} & .136\sd{.002} & .599\sd{.010} & .503\sd{.008} & .439\sd{.001}\\
GRPO & .836\sd{.006} & .939\sd{.001} & .610\sd{.021} & .451\sd{.021} & .475\sd{.030} & .856\sd{.014} & .779\sd{.014} & .706\sd{.009}\\
RLCR & .821\sd{.011} & .932\sd{.006} & .564\sd{.021} & .382\sd{.010} & .465\sd{.039} & .851\sd{.021} & .762\sd{.008} & .682\sd{.010}\\
DCPO & .845\sd{.003} & .944\sd{.007} & .624\sd{.035} & .478\sd{.009} & .521\sd{.019} & .864\sd{.018} & .780\sd{.029} & .722\sd{.008}\\
\rowcolor{digitbg} Base-d & .603\sd{.005} & .821\sd{.004} & .240\sd{.013} & .172\sd{.009} & .136\sd{.002} & .599\sd{.010} & .503\sd{.008} & .439\sd{.001}\\
\rowcolor{digitbg} GRPO-d & .836\sd{.006} & .939\sd{.001} & .610\sd{.021} & .451\sd{.021} & .475\sd{.030} & .856\sd{.014} & .779\sd{.014} & .706\sd{.009}\\
\rowcolor{digitbg} RLCR-d & .821\sd{.011} & .932\sd{.006} & .564\sd{.021} & .382\sd{.010} & .465\sd{.039} & .851\sd{.021} & .762\sd{.008} & .682\sd{.010}\\
\rowcolor{digitbg} DCPO-d & .845\sd{.003} & .944\sd{.007} & .624\sd{.035} & .478\sd{.009} & .521\sd{.019} & .864\sd{.018} & .780\sd{.029} & .722\sd{.008}\\
\rowcolor{oursbg} CREDO & \textbf{.850\sd{.003}} & \textbf{.953\sd{.002}} & \textbf{.682\sd{.032}} & \textbf{.504\sd{.015}} & \textbf{.561\sd{.042}} & \textbf{.909\sd{.015}} & \textbf{.801\sd{.008}} & \textbf{.752\sd{.013}}\\
\midrule
\multicolumn{9}{@{}l}{\textsc{ECE} $\downarrow$}\\
\midrule
Base & .317\sd{.005} & .118\sd{.004} & .660\sd{.011} & .714\sd{.018} & .751\sd{.005} & .329\sd{.009} & .416\sd{.013} & .472\sd{.002}\\
GRPO & .093\sd{.017} & \textbf{.028\sd{.007}} & .254\sd{.054} & .396\sd{.066} & .388\sd{.037} & \textbf{.083\sd{.019}} & .149\sd{.028} & .199\sd{.027}\\
RLCR & .054\sd{.003} & .078\sd{.033} & .196\sd{.029} & .345\sd{.030} & .266\sd{.032} & .103\sd{.017} & \textbf{.072\sd{.024}} & .159\sd{.012}\\
DCPO & .129\sd{.002} & .048\sd{.006} & .279\sd{.039} & .388\sd{.014} & .379\sd{.029} & .107\sd{.015} & .189\sd{.019} & .217\sd{.008}\\
\rowcolor{digitbg} Base-d & .276\sd{.005} & .113\sd{.006} & .626\sd{.015} & .694\sd{.010} & .731\sd{.005} & .284\sd{.010} & .374\sd{.011} & .442\sd{.001}\\
\rowcolor{digitbg} GRPO-d & .058\sd{.012} & .076\sd{.011} & .252\sd{.036} & .390\sd{.047} & .368\sd{.021} & \textbf{.088\sd{.053}} & .109\sd{.018} & .192\sd{.025}\\
\rowcolor{digitbg} RLCR-d & .070\sd{.005} & .101\sd{.017} & .201\sd{.002} & .336\sd{.025} & .256\sd{.024} & .107\sd{.014} & \textbf{.073\sd{.011}} & .163\sd{.005}\\
\rowcolor{digitbg} DCPO-d & .065\sd{.012} & .083\sd{.012} & .249\sd{.021} & .359\sd{.037} & .329\sd{.029} & .121\sd{.021} & .108\sd{.015} & .188\sd{.010}\\
\rowcolor{oursbg} CREDO & \textbf{.033\sd{.016}} & .051\sd{.021} & \textbf{.128\sd{.017}} & \textbf{.129\sd{.021}} & \textbf{.171\sd{.007}} & .111\sd{.029} & .099\sd{.026} & \textbf{.103\sd{.013}}\\
\midrule
\multicolumn{9}{@{}l}{\textsc{AUROC} $\uparrow$}\\
\midrule
Base & .619\sd{.018} & .663\sd{.018} & .674\sd{.015} & .653\sd{.025} & .641\sd{.011} & .620\sd{.024} & .640\sd{.017} & .644\sd{.004}\\
GRPO & .699\sd{.044} & .732\sd{.031} & .820\sd{.090} & .826\sd{.072} & .840\sd{.069} & .787\sd{.102} & .722\sd{.031} & .775\sd{.062}\\
RLCR & .737\sd{.021} & .843\sd{.008} & .863\sd{.029} & .871\sd{.003} & .877\sd{.037} & .848\sd{.018} & .781\sd{.017} & .831\sd{.009}\\
DCPO & .752\sd{.009} & .813\sd{.037} & .913\sd{.033} & .882\sd{.017} & .921\sd{.024} & .863\sd{.042} & .793\sd{.049} & .848\sd{.012}\\
\rowcolor{digitbg} Base-d & .703\sd{.007} & .829\sd{.009} & .791\sd{.020} & .779\sd{.020} & .768\sd{.021} & .770\sd{.007} & .711\sd{.039} & .765\sd{.010}\\
\rowcolor{digitbg} GRPO-d & .768\sd{.027} & .866\sd{.013} & .900\sd{.029} & .900\sd{.013} & .906\sd{.024} & .907\sd{.024} & .804\sd{.024} & .864\sd{.020}\\
\rowcolor{digitbg} RLCR-d & .754\sd{.011} & .870\sd{.009} & .877\sd{.020} & .891\sd{.010} & .901\sd{.024} & .877\sd{.019} & .798\sd{.007} & .852\sd{.006}\\
\rowcolor{digitbg} DCPO-d & .749\sd{.009} & .847\sd{.038} & .903\sd{.028} & .903\sd{.007} & .938\sd{.019} & .887\sd{.053} & .776\sd{.043} & .858\sd{.006}\\
\rowcolor{oursbg} CREDO & \textbf{.846\sd{.011}} & \textbf{.926\sd{.017}} & \textbf{.961\sd{.002}} & \textbf{.941\sd{.027}} & \textbf{.968\sd{.007}} & \textbf{.954\sd{.009}} & \textbf{.935\sd{.013}} & \textbf{.933\sd{.007}}\\
\midrule
\multicolumn{9}{@{}l}{\textsc{Brier} $\downarrow$}\\
\midrule
Base & .326\sd{.004} & .150\sd{.002} & .608\sd{.010} & .657\sd{.018} & .691\sd{.006} & .336\sd{.010} & .406\sd{.011} & .453\sd{.002}\\
GRPO & .133\sd{.009} & .051\sd{.002} & .251\sd{.053} & .359\sd{.071} & .348\sd{.045} & .116\sd{.015} & .172\sd{.021} & .204\sd{.029}\\
RLCR & .130\sd{.009} & .057\sd{.005} & .214\sd{.012} & .291\sd{.024} & .250\sd{.018} & .106\sd{.011} & .151\sd{.006} & .171\sd{.009}\\
DCPO & .130\sd{.001} & .047\sd{.006} & .250\sd{.029} & .345\sd{.005} & .330\sd{.048} & .104\sd{.014} & .181\sd{.012} & .198\sd{.003}\\
\rowcolor{digitbg} Base-d & .304\sd{.004} & .143\sd{.003} & .565\sd{.013} & .619\sd{.010} & .648\sd{.006} & .310\sd{.008} & .376\sd{.009} & .424\sd{.001}\\
\rowcolor{digitbg} GRPO-d & .127\sd{.008} & .054\sd{.001} & .262\sd{.041} & .360\sd{.054} & .340\sd{.029} & .111\sd{.017} & .169\sd{.015} & .203\sd{.023}\\
\rowcolor{digitbg} RLCR-d & .129\sd{.008} & .059\sd{.004} & .218\sd{.015} & .288\sd{.022} & .245\sd{.016} & .107\sd{.011} & .150\sd{.004} & .171\sd{.009}\\
\rowcolor{digitbg} DCPO-d & .122\sd{.002} & .050\sd{.004} & .249\sd{.019} & .339\sd{.045} & .308\sd{.048} & .106\sd{.011} & .165\sd{.006} & .191\sd{.013}\\
\rowcolor{oursbg} CREDO & \textbf{.093\sd{.004}} & \textbf{.030\sd{.003}} & \textbf{.091\sd{.007}} & \textbf{.122\sd{.010}} & \textbf{.114\sd{.007}} & \textbf{.060\sd{.012}} & \textbf{.092\sd{.007}} & \textbf{.086\sd{.004}}\\
\bottomrule
\end{tabular}
\end{minipage}

\noindent\begin{minipage}{\textwidth}
\centering\scriptsize
\setlength{\tabcolsep}{3pt}
\captionof{table}{Per-set results on code, models trained on mathematics.}
\label{tab:persetcodeout}
\vspace{0.4em}
\begin{tabular}{@{}l*{10}{r}@{}}
\toprule
& \multicolumn{5}{c}{Accuracy $\uparrow$} & \multicolumn{5}{c}{ECE $\downarrow$}\\
\cmidrule(lr){2-6}\cmidrule(lr){7-11}
& DeepCoder & HumanEval+ & LCB v5 & LCB v6 & Average & DeepCoder & HumanEval+ & LCB v5 & LCB v6 & Average\\
\midrule
GRPO & \textbf{.389\sd{.015}} & .852\sd{.003} & .400\sd{.002} & .338\sd{.007} & .495\sd{.004} & .418\sd{.058} & .086\sd{.011} & .360\sd{.042} & .396\sd{.057} & .315\sd{.041}\\
RLCR & \textbf{.388\sd{.006}} & .858\sd{.003} & \textbf{.400\sd{.009}} & \textbf{.365\sd{.007}} & \textbf{.502\sd{.002}} & .311\sd{.027} & \textbf{.054\sd{.017}} & .271\sd{.017} & .276\sd{.022} & .228\sd{.014}\\
DCPO & \textbf{.394\sd{.022}} & \textbf{.861\sd{.004}} & \textbf{.415\sd{.025}} & \textbf{.363\sd{.019}} & \textbf{.508\sd{.014}} & .436\sd{.059} & .153\sd{.017} & .344\sd{.047} & .387\sd{.049} & .330\sd{.040}\\
\rowcolor{digitbg} GRPO-d & \textbf{.389\sd{.015}} & .852\sd{.003} & .400\sd{.002} & .338\sd{.007} & .495\sd{.004} & .405\sd{.040} & \textbf{.050\sd{.002}} & .345\sd{.031} & .391\sd{.042} & .298\sd{.028}\\
\rowcolor{digitbg} RLCR-d & \textbf{.388\sd{.006}} & .858\sd{.003} & \textbf{.400\sd{.009}} & \textbf{.365\sd{.007}} & \textbf{.502\sd{.002}} & .315\sd{.022} & \textbf{.056\sd{.011}} & .270\sd{.016} & .285\sd{.023} & .232\sd{.013}\\
\rowcolor{digitbg} DCPO-d & \textbf{.394\sd{.022}} & \textbf{.861\sd{.004}} & \textbf{.415\sd{.025}} & \textbf{.363\sd{.019}} & \textbf{.508\sd{.014}} & .384\sd{.036} & \textbf{.050\sd{.021}} & .304\sd{.031} & .361\sd{.039} & .275\sd{.027}\\
\rowcolor{oursbg} CREDO & \textbf{.401\sd{.025}} & \textbf{.853\sd{.016}} & \textbf{.406\sd{.030}} & \textbf{.357\sd{.018}} & \textbf{.504\sd{.021}} & \textbf{.178\sd{.019}} & .062\sd{.010} & \textbf{.148\sd{.016}} & \textbf{.165\sd{.025}} & \textbf{.138\sd{.016}}\\
\midrule
& \multicolumn{5}{c}{AUROC $\uparrow$} & \multicolumn{5}{c}{Brier $\downarrow$}\\
\cmidrule(lr){2-6}\cmidrule(lr){7-11}
& DeepCoder & HumanEval+ & LCB v5 & LCB v6 & Average & DeepCoder & HumanEval+ & LCB v5 & LCB v6 & Average\\
\midrule
GRPO & .765\sd{.033} & .611\sd{.036} & .733\sd{.026} & .777\sd{.041} & .721\sd{.032} & .390\sd{.058} & .127\sd{.005} & .349\sd{.039} & .365\sd{.059} & .308\sd{.040}\\
RLCR & .832\sd{.005} & .717\sd{.025} & .795\sd{.003} & .826\sd{.011} & .792\sd{.005} & .279\sd{.017} & .118\sd{.003} & .263\sd{.011} & .255\sd{.015} & .229\sd{.010}\\
DCPO & .776\sd{.052} & .626\sd{.058} & .784\sd{.025} & .799\sd{.031} & .746\sd{.038} & .415\sd{.047} & .153\sd{.018} & .338\sd{.036} & .366\sd{.044} & .318\sd{.032}\\
\rowcolor{digitbg} GRPO-d & .827\sd{.029} & .702\sd{.035} & .794\sd{.019} & .846\sd{.024} & .792\sd{.026} & .380\sd{.041} & .122\sd{.003} & .338\sd{.028} & .358\sd{.042} & .299\sd{.028}\\
\rowcolor{digitbg} RLCR-d & .850\sd{.006} & \textbf{.754\sd{.026}} & .821\sd{.006} & .838\sd{.017} & .816\sd{.003} & .281\sd{.015} & \textbf{.110\sd{.002}} & .258\sd{.011} & .255\sd{.017} & .226\sd{.011}\\
\rowcolor{digitbg} DCPO-d & .803\sd{.023} & .688\sd{.022} & .807\sd{.026} & .812\sd{.030} & .778\sd{.021} & .366\sd{.025} & .119\sd{.004} & .309\sd{.019} & .337\sd{.027} & .283\sd{.017}\\
\rowcolor{oursbg} CREDO & \textbf{.869\sd{.010}} & \textbf{.735\sd{.033}} & \textbf{.858\sd{.013}} & \textbf{.883\sd{.003}} & \textbf{.836\sd{.009}} & \textbf{.191\sd{.008}} & .120\sd{.011} & \textbf{.176\sd{.009}} & \textbf{.173\sd{.010}} & \textbf{.165\sd{.008}}\\
\bottomrule
\end{tabular}
\end{minipage}

\noindent\begin{minipage}{\textwidth}
\centering\scriptsize
\setlength{\tabcolsep}{3pt}
\captionof{table}{Per-set results on code.}
\label{tab:persetcode}
\vspace{0.4em}
\begin{tabular}{@{}l*{10}{r}@{}}
\toprule
& \multicolumn{5}{c}{Accuracy $\uparrow$} & \multicolumn{5}{c}{ECE $\downarrow$}\\
\cmidrule(lr){2-6}\cmidrule(lr){7-11}
& DeepCoder & HumanEval+ & LCB v5 & LCB v6 & Average & DeepCoder & HumanEval+ & LCB v5 & LCB v6 & Average\\
\midrule
Base & .318\sd{.004} & .829\sd{.007} & .345\sd{.012} & .330\sd{.007} & .455\sd{.001} & .558\sd{.005} & .105\sd{.008} & .503\sd{.012} & .506\sd{.001} & .418\sd{.001}\\
GRPO & .475\sd{.006} & \textbf{.882\sd{.011}} & .522\sd{.008} & \textbf{.443\sd{.007}} & .581\sd{.007} & .229\sd{.070} & .113\sd{.098} & .187\sd{.083} & .199\sd{.036} & .182\sd{.070}\\
RLCR & .449\sd{.008} & \textbf{.872\sd{.019}} & .480\sd{.017} & .412\sd{.016} & .553\sd{.013} & \textbf{.049\sd{.014}} & .111\sd{.017} & \textbf{.068\sd{.010}} & \textbf{.071\sd{.016}} & \textbf{.075\sd{.005}}\\
DCPO & .477\sd{.016} & \textbf{.878\sd{.016}} & .525\sd{.013} & .422\sd{.010} & .576\sd{.003} & .167\sd{.031} & .112\sd{.015} & .178\sd{.041} & .156\sd{.013} & .153\sd{.024}\\
\rowcolor{digitbg} Base-d & .318\sd{.004} & .829\sd{.007} & .345\sd{.012} & .330\sd{.007} & .455\sd{.001} & .528\sd{.004} & .074\sd{.017} & .474\sd{.012} & .481\sd{.003} & .389\sd{.002}\\
\rowcolor{digitbg} GRPO-d & .475\sd{.006} & \textbf{.882\sd{.011}} & .522\sd{.008} & \textbf{.443\sd{.007}} & .581\sd{.007} & .218\sd{.038} & \textbf{.040\sd{.019}} & .150\sd{.022} & .213\sd{.030} & .155\sd{.022}\\
\rowcolor{digitbg} RLCR-d & .449\sd{.008} & \textbf{.872\sd{.019}} & .480\sd{.017} & .412\sd{.016} & .553\sd{.013} & \textbf{.056\sd{.013}} & .115\sd{.014} & \textbf{.066\sd{.012}} & \textbf{.082\sd{.017}} & .080\sd{.004}\\
\rowcolor{digitbg} DCPO-d & .477\sd{.016} & \textbf{.878\sd{.016}} & .525\sd{.013} & .422\sd{.010} & .576\sd{.003} & .164\sd{.031} & .112\sd{.015} & .176\sd{.041} & .157\sd{.013} & .152\sd{.025}\\
\rowcolor{oursbg} CREDO & \textbf{.509\sd{.027}} & \textbf{.881\sd{.006}} & \textbf{.563\sd{.016}} & \textbf{.448\sd{.024}} & \textbf{.600\sd{.018}} & \textbf{.049\sd{.012}} & \textbf{.062\sd{.046}} & \textbf{.069\sd{.009}} & \textbf{.076\sd{.017}} & \textbf{.064\sd{.020}}\\
\midrule
& \multicolumn{5}{c}{AUROC $\uparrow$} & \multicolumn{5}{c}{Brier $\downarrow$}\\
\cmidrule(lr){2-6}\cmidrule(lr){7-11}
& DeepCoder & HumanEval+ & LCB v5 & LCB v6 & Average & DeepCoder & HumanEval+ & LCB v5 & LCB v6 & Average\\
\midrule
Base & .701\sd{.010} & .628\sd{.019} & .695\sd{.006} & .735\sd{.012} & .690\sd{.004} & .513\sd{.006} & .146\sd{.006} & .464\sd{.009} & .454\sd{.007} & .395\sd{.002}\\
GRPO & .791\sd{.094} & .605\sd{.033} & .794\sd{.083} & .822\sd{.057} & .753\sd{.061} & .237\sd{.067} & .156\sd{.097} & .215\sd{.074} & .211\sd{.035} & .205\sd{.066}\\
RLCR & .845\sd{.003} & \textbf{.763\sd{.028}} & .825\sd{.017} & .825\sd{.020} & .815\sd{.014} & .156\sd{.003} & .110\sd{.004} & .166\sd{.008} & .150\sd{.012} & .146\sd{.006}\\
DCPO & .834\sd{.032} & .603\sd{.040} & .829\sd{.040} & .839\sd{.017} & .776\sd{.025} & .168\sd{.030} & .113\sd{.014} & .177\sd{.041} & .155\sd{.014} & .153\sd{.024}\\
\rowcolor{digitbg} Base-d & .796\sd{.010} & .719\sd{.022} & .782\sd{.006} & .832\sd{.006} & .782\sd{.003} & .482\sd{.004} & .141\sd{.006} & .437\sd{.009} & .432\sd{.006} & .373\sd{.002}\\
\rowcolor{digitbg} GRPO-d & .874\sd{.033} & .713\sd{.047} & .861\sd{.031} & .884\sd{.027} & .833\sd{.034} & .230\sd{.036} & \textbf{.098\sd{.010}} & .192\sd{.024} & .214\sd{.030} & .184\sd{.025}\\
\rowcolor{digitbg} RLCR-d & .848\sd{.001} & \textbf{.775\sd{.027}} & .826\sd{.017} & .821\sd{.018} & .818\sd{.011} & .155\sd{.001} & .110\sd{.003} & .165\sd{.008} & .151\sd{.012} & .145\sd{.005}\\
\rowcolor{digitbg} DCPO-d & \textbf{.883\sd{.034}} & .730\sd{.081} & \textbf{.887\sd{.051}} & \textbf{.896\sd{.031}} & \textbf{.849\sd{.027}} & .165\sd{.031} & .112\sd{.015} & .175\sd{.041} & .155\sd{.013} & .152\sd{.025}\\
\rowcolor{oursbg} CREDO & \textbf{.889\sd{.025}} & .736\sd{.045} & \textbf{.888\sd{.023}} & \textbf{.887\sd{.029}} & \textbf{.850\sd{.027}} & \textbf{.131\sd{.012}} & \textbf{.115\sd{.037}} & \textbf{.132\sd{.010}} & \textbf{.129\sd{.010}} & \textbf{.127\sd{.016}}\\
\bottomrule
\end{tabular}
\end{minipage}

\noindent\begin{minipage}{\textwidth}
\centering\scriptsize
\setlength{\tabcolsep}{2.6pt}
\captionof{table}{Per-set results on mathematics, models trained on code.}
\label{tab:persetmathout}
\vspace{0.4em}
\begin{tabular}{@{}l*{8}{r}@{}}
\toprule
& DeepScaleR & MATH-500 & AIME-24 & AIME-25 & AIME-26 & AMC-23 & AMC-24 & Average\\
\cmidrule(lr){2-9}
\midrule
\multicolumn{9}{@{}l}{\textsc{Accuracy} $\uparrow$}\\
\midrule
GRPO & \textbf{.743\sd{.004}} & \textbf{.892\sd{.005}} & .374\sd{.016} & \textbf{.276\sd{.009}} & \textbf{.288\sd{.004}} & \textbf{.728\sd{.003}} & \textbf{.644\sd{.010}} & \textbf{.564\sd{.003}}\\
RLCR & .701\sd{.016} & .873\sd{.009} & .324\sd{.009} & .233\sd{.026} & .207\sd{.016} & .688\sd{.024} & .579\sd{.025} & .515\sd{.011}\\
DCPO & .721\sd{.009} & \textbf{.894\sd{.002}} & .357\sd{.010} & .240\sd{.024} & .253\sd{.013} & \textbf{.727\sd{.004}} & \textbf{.644\sd{.017}} & .548\sd{.003}\\
\rowcolor{digitbg} GRPO-d & \textbf{.743\sd{.004}} & \textbf{.892\sd{.005}} & .374\sd{.016} & \textbf{.276\sd{.009}} & \textbf{.288\sd{.004}} & \textbf{.728\sd{.003}} & \textbf{.644\sd{.010}} & \textbf{.564\sd{.003}}\\
\rowcolor{digitbg} RLCR-d & .701\sd{.016} & .873\sd{.009} & .324\sd{.009} & .233\sd{.026} & .207\sd{.016} & .688\sd{.024} & .579\sd{.025} & .515\sd{.011}\\
\rowcolor{digitbg} DCPO-d & .721\sd{.009} & \textbf{.894\sd{.002}} & .357\sd{.010} & .240\sd{.024} & .253\sd{.013} & \textbf{.727\sd{.004}} & \textbf{.644\sd{.017}} & .548\sd{.003}\\
\rowcolor{oursbg} CREDO & .717\sd{.021} & \textbf{.888\sd{.013}} & \textbf{.401\sd{.038}} & \textbf{.268\sd{.017}} & .269\sd{.023} & \textbf{.737\sd{.020}} & \textbf{.642\sd{.027}} & \textbf{.560\sd{.007}}\\
\midrule
\multicolumn{9}{@{}l}{\textsc{ECE} $\downarrow$}\\
\midrule
GRPO & .178\sd{.010} & \textbf{.048\sd{.009}} & .496\sd{.035} & .587\sd{.038} & .571\sd{.033} & .200\sd{.012} & .263\sd{.007} & .335\sd{.019}\\
RLCR & \textbf{.034\sd{.002}} & .102\sd{.010} & .249\sd{.014} & .306\sd{.020} & .328\sd{.019} & \textbf{.055\sd{.018}} & \textbf{.117\sd{.003}} & .170\sd{.004}\\
DCPO & .224\sd{.006} & .092\sd{.009} & .237\sd{.044} & \textbf{.197\sd{.061}} & .237\sd{.034} & .216\sd{.015} & .228\sd{.023} & .205\sd{.021}\\
\rowcolor{digitbg} GRPO-d & .133\sd{.011} & .063\sd{.019} & .468\sd{.032} & .569\sd{.023} & .551\sd{.022} & .165\sd{.014} & .227\sd{.008} & .311\sd{.017}\\
\rowcolor{digitbg} RLCR-d & .049\sd{.005} & .115\sd{.007} & .250\sd{.008} & .308\sd{.015} & .331\sd{.018} & \textbf{.056\sd{.014}} & \textbf{.098\sd{.020}} & .172\sd{.006}\\
\rowcolor{digitbg} DCPO-d & .224\sd{.006} & .091\sd{.008} & .234\sd{.039} & \textbf{.196\sd{.059}} & .238\sd{.031} & .214\sd{.016} & .225\sd{.023} & .203\sd{.020}\\
\rowcolor{oursbg} CREDO & .127\sd{.071} & .109\sd{.086} & \textbf{.100\sd{.036}} & \textbf{.146\sd{.071}} & \textbf{.126\sd{.067}} & .118\sd{.073} & \textbf{.093\sd{.063}} & \textbf{.117\sd{.030}}\\
\midrule
\multicolumn{9}{@{}l}{\textsc{AUROC} $\uparrow$}\\
\midrule
GRPO & .682\sd{.037} & .739\sd{.048} & .773\sd{.014} & .780\sd{.039} & .767\sd{.035} & .682\sd{.046} & .712\sd{.028} & .734\sd{.030}\\
RLCR & .734\sd{.013} & .857\sd{.005} & .837\sd{.029} & .889\sd{.041} & .843\sd{.017} & .747\sd{.020} & .797\sd{.032} & .815\sd{.016}\\
DCPO & .686\sd{.025} & .747\sd{.027} & .769\sd{.037} & .837\sd{.050} & .815\sd{.008} & .711\sd{.022} & .720\sd{.035} & .755\sd{.024}\\
\rowcolor{digitbg} GRPO-d & \textbf{.755\sd{.003}} & .857\sd{.015} & .848\sd{.024} & .887\sd{.009} & .841\sd{.004} & .819\sd{.014} & .776\sd{.015} & .826\sd{.004}\\
\rowcolor{digitbg} RLCR-d & .748\sd{.010} & \textbf{.876\sd{.007}} & .856\sd{.018} & .907\sd{.036} & \textbf{.861\sd{.025}} & .762\sd{.018} & \textbf{.819\sd{.027}} & .833\sd{.014}\\
\rowcolor{digitbg} DCPO-d & .750\sd{.005} & .859\sd{.028} & .818\sd{.019} & \textbf{.916\sd{.047}} & \textbf{.866\sd{.016}} & \textbf{.830\sd{.022}} & \textbf{.803\sd{.036}} & .835\sd{.004}\\
\rowcolor{oursbg} CREDO & .736\sd{.022} & \textbf{.874\sd{.011}} & \textbf{.900\sd{.009}} & \textbf{.944\sd{.013}} & \textbf{.850\sd{.026}} & \textbf{.845\sd{.028}} & \textbf{.834\sd{.049}} & \textbf{.855\sd{.013}}\\
\midrule
\multicolumn{9}{@{}l}{\textsc{Brier} $\downarrow$}\\
\midrule
GRPO & .207\sd{.006} & \textbf{.090\sd{.006}} & .452\sd{.036} & .525\sd{.042} & .510\sd{.039} & .220\sd{.011} & .268\sd{.008} & .325\sd{.020}\\
RLCR & \textbf{.176\sd{.008}} & \textbf{.090\sd{.002}} & .217\sd{.011} & .216\sd{.011} & .234\sd{.012} & .177\sd{.008} & .191\sd{.008} & .186\sd{.007}\\
DCPO & .228\sd{.005} & \textbf{.091\sd{.008}} & .235\sd{.043} & .197\sd{.060} & .237\sd{.031} & .216\sd{.015} & .229\sd{.024} & .205\sd{.020}\\
\rowcolor{digitbg} GRPO-d & .193\sd{.007} & \textbf{.087\sd{.007}} & .426\sd{.031} & .503\sd{.028} & .485\sd{.030} & .208\sd{.005} & .257\sd{.006} & .308\sd{.015}\\
\rowcolor{digitbg} RLCR-d & \textbf{.175\sd{.007}} & \textbf{.089\sd{.002}} & .213\sd{.012} & .214\sd{.009} & .232\sd{.013} & .175\sd{.009} & .186\sd{.006} & .184\sd{.006}\\
\rowcolor{digitbg} DCPO-d & .227\sd{.006} & \textbf{.091\sd{.008}} & .232\sd{.039} & .195\sd{.059} & .235\sd{.030} & .213\sd{.015} & .226\sd{.023} & .203\sd{.020}\\
\rowcolor{oursbg} CREDO & \textbf{.188\sd{.034}} & \textbf{.092\sd{.042}} & \textbf{.130\sd{.007}} & \textbf{.111\sd{.025}} & \textbf{.153\sd{.019}} & \textbf{.155\sd{.032}} & \textbf{.164\sd{.029}} & \textbf{.142\sd{.013}}\\
\bottomrule
\end{tabular}
\end{minipage}

\subsection{Sensitivity to the coefficients}
\label{app:sens}

Table~\ref{tab:sens} gives every evaluated setting of the three coefficients.

\noindent\begin{minipage}{\textwidth}
\centering\scriptsize
\setlength{\tabcolsep}{3.5pt}
\captionof{table}{Every evaluated setting of the three coefficients
of~\S\ref{sec:credo}, all on seed 43.}
\label{tab:sens}
\vspace{0.4em}
\begin{tabular}{@{}lrrrrrrrr@{}}
\toprule
& \multicolumn{4}{c}{Mathematics} & \multicolumn{4}{c}{Code}\\
\cmidrule(lr){2-5}\cmidrule(lr){6-9}
& \hd & \hd\\
\midrule
\multicolumn{9}{@{}l}{\textsc{Weighting strength $\kappa$}}\\
\midrule
$0$ & .721 & .122 & .904 & .102 & .572 & .051 & .843 & .132 \\
$0.25$ & .753 & .094 & .933 & .088 & .584 & .044 & .854 & .124 \\
$0.5$ & .749 & .093 & .924 & .101 & .614 & .052 & .869 & .117 \\
$1$ & .730 & .108 & .924 & .100 & .591 & .049 & .858 & .124 \\
\midrule
\multicolumn{9}{@{}l}{\textsc{Calibration weight $\alpha$}}\\
\midrule
$0$ & .696 & .241 & .880 & .232 & .611 & .172 & .857 & .172 \\
$0.01$ & .715 & .123 & .912 & .115 & .581 & .044 & .851 & .123 \\
$0.03$ & .753 & .094 & .933 & .088 & .614 & .052 & .869 & .117 \\
$0.05$ & .708 & .103 & .910 & .107 & .583 & .042 & .857 & .123 \\
$0.10$ & .716 & .089 & .915 & .098 & .577 & .042 & .867 & .117 \\
\midrule
\multicolumn{9}{@{}l}{\textsc{Target mixture $\gamma$}}\\
\midrule
$0$ & .732 & .130 & .918 & .098 & .593 & .054 & .854 & .125 \\
$0.5$ & .753 & .094 & .933 & .088 & .614 & .052 & .869 & .117 \\
$1$ & .738 & .101 & .924 & .100 & .587 & .034 & .855 & .120 \\
\bottomrule
\end{tabular}
\end{minipage}

\end{document}